\newif\ifcolt
\newcommand{\numberofauthors}[1]{} \newcommand{\affaddr}[1]{#1} \newcommand{\email}[1]{\href{mailto:#1}{\texttt{#1}}} \newcommand{\myand}{\and} \newcommand{\alignauthor}{} \newcommand{\myproof}{Proof } 
\newcommand{\numberofauthors}[1]{} \newcommand{\affaddr}[1]{#1} \usepackage{amsthm} \newcommand{\email}[1]{\href{mailto:#1}{\texttt{#1}}} \newcommand{\myand}{\and} \newcommand{\alignauthor}{} \newcommand{\myproof}{Proof } \usepackage{fullpage}
\newcommand{\optionalbreak}{}
\newcommand{\mykeywords}[1]{\begin{keywords}#1\end{keywords}}
\author{
\alignauthor Joseph Anderson \\
\affaddr{Computer Science and Engineering} \\
\affaddr{Ohio State University} \\
\email{andejose@cse.ohio-state.edu}
\myand Navin Goyal\\
\affaddr{Microsoft Research India} \\
\email{navingo@microsoft.com}
\myand Luis Rademacher \\
\affaddr{Computer Science and Engineering} \\
\affaddr{Ohio State University} \\
\email{lrademac@cse.ohio-state.edu}}
\newtheorem{theorem}{Theorem}
\newtheorem{lemma}[theorem]{Lemma}
\newtheorem{definition}[theorem]{Definition}
\providecommand{\acks}[1]{\section*{Acknowledgments}{#1}}
\newcommand{\mykeywords}[1]{}
\newenvironment{proofidea}{\noindent{\textit{Proof idea.}}}{\hfill$\square$\medskip}
\newcommand{\card}[1]{\lvert#1\rvert}
\renewcommand{\dim}{{n}}
\newcommand{\RR}{\ensuremath{\mathbb{R}}}
\newcommand{\ones}{\ensuremath{\mathbbm{1}}}
\newcommand{\expectation}{\operatorname{\mathbb{E}}}
\newcommand{\e}{\expectation}
\newcommand{\conv}{\operatorname{conv}}
\newcommand{\suchthat}{\mathrel{:}}
\newcommand{\norm}[1]{{\lVert#1\rVert}}
\newcommand{\diag}{\operatorname{diag}}
\newcommand{\eps}{\epsilon}
\newcommand{\poly}{\operatorname{poly}}
\newcommand{\polylog}{\operatorname{polylog}}
\newcommand{\abs}[1]{\lvert#1\rvert}
\newcommand{\vol}{\operatorname{vol}}
\newcommand{\expdist}[1]{\mathrm{Exp({#1})}}
\newcommand{\noname}[1]{}
\newcommand{\dimension}{n}
\newtheorem{problem}{Problem}
\title{Efficient Learning of Simplices}
\date{}
\begin{document}
\maketitle
\begin{abstract}
We show an efficient algorithm for the following problem: Given uniformly random points from an arbitrary $\dim$-dimen\-sional simplex, estimate the simplex. The size of the sample and the number of arithmetic operations of our algorithm are polynomial in $\dim$.
This answers a question of Frieze, Jerrum and Kannan~\cite{FJK}.
Our result can also be interpreted as efficiently learning the intersection of $\dim+1$ half-spaces in $\RR^\dim$ in the model where the intersection is bounded and we are given polynomially many uniform samples from it.
Our proof uses the local search technique from Independent Component Analysis (ICA), also used by \cite{FJK}. Unlike these previous
algorithms, which were based on analyzing the fourth moment, ours is based on the third moment.

We also show a direct connection between the problem of learning a simplex and ICA: a simple randomized reduction to ICA from the problem of learning a simplex. The connection is based on a known representation of the uniform measure on a simplex. Similar representations lead to a reduction from the problem of learning an affine transformation of an $n$-dimensional $\ell_p$ ball to ICA. 
\end{abstract}

\mykeywords{independent component analysis, randomized reductions, learning convex bodies, method of moments}

\section{Introduction}
We are given uniformly random samples from an unknown convex body in $\RR^\dim$, how many samples are needed to approximately reconstruct
the body? It seems intuitively clear, at least for $\dim = 2, 3$, that
if we are given sufficiently many such samples then we can reconstruct (or learn) the body with very little error. For general $n$, it is known to require $2^{\Omega(\sqrt{\dim})}$
samples~\cite{GR09} (see also \cite{KOR} for a similar lower bound in a different but related model of learning).
This is an information-theoretic lower bound and no computational considerations are involved. As mentioned
in \cite{GR09}, it turns out that if the body has few facets (e.g. polynomial in $\dim$), then polynomial in $\dim$ samples are
sufficient for approximate reconstruction.
This is an information-theoretic upper bound and no efficient algorithms (i.e., with running time poly$(n)$) are known.
(We remark that to our knowledge the same
situation holds for polytopes with poly$(n)$ vertices.)  In this paper we study the
reconstruction problem for the special case when the input bodies are restricted to be (full-dimensional) simplices. We show that
in this case one can in
fact learn the body efficiently. More precisely, the algorithm  knows that the input body is a simplex but only up to an affine transformation, and the problem is to recover this affine transformation.
This answers a question of \noname{Frieze~et~al.~}\cite[Section 6]{FJK}.

The problem of learning a simplex is also closely related to the well-studied problem of learning intersections of half-spaces.
Suppose that
the intersection of $\dim+1$ half-spaces in $\RR^\dim$ is bounded, and we are given poly$(\dim)$ uniformly random samples from it. Then our
learning simplices result directly implies that we can learn the $\dim+1$ half-spaces. This also has the advantage of being a proper
learning algorithm, meaning that the output of the algorithm is a set of $\dim+1$ half-spaces, unlike many of the previous algorithms.

\paragraph{Previous work.}

Perhaps the first approach to learning simplices that comes to mind is to find a minimum volume simplex containing the samples. This can be shown
to be a good approximation to the original simplex. (Such minimum volume estimators have been studied in machine learning literature, see e.g. \cite{ScholkopfPSSW01} for the problem of estimating the support of a probability distribution. We are not aware of any
technique that applies to our situation and provides theoretical guarantees.)
However, the problem of finding a minimum volume simplex is in general
NP-hard~\cite{Packer}. This hardness is not directly applicable for our problem because our input is a random sample and not a general
point set. Nevertheless, we do not have an algorithm for directly finding a minimum volume simplex; instead we use ideas similar to those used in Independent Component Analysis (ICA).
ICA studies the following problem:
Given a sample from an affine transformation of a random vector with independently distributed coordinates, recover the affine transformation (up to some unavoidable ambiguities).
\noname{Frieze~et~al.~}\cite{FJK} gave an efficient algorithm for this problem
(with some restrictions on the allowed distributions, but also with some weaker requirements than full independence)
along with most of the details of a rigorous analysis (a complete analysis of a special case can be found in \noname{Arora et al.~}\cite{Arora2012}; see also \noname{Vempala and Xiao~}\cite{VempalaX11} for a generalization of ICA to subspaces along with a rigorous analysis). 
The problem of learning parallelepipeds from uniformly random samples is a special case of this problem. 
\cite{FJK} asked if one could learn other convex bodies, and in particular simplices, efficiently from uniformly random samples.
\noname{Nguyen and Regev~}\cite{NguyenR09} gave a simpler and rigorous algorithm and analysis for the case of learning parallelepipeds with similarities to the popular FastICA algorithm of \noname{Hyv\"{a}rinen~}\cite{Hyvarinen99}.
The algorithm in \cite{NguyenR09} is a first order algorithm unlike Frieze~et~al.'s second order algorithm.

The algorithms in both \cite{FJK, NguyenR09} make use of the fourth moment function of the probability distribution. Briefly, the fourth moment in direction $u \in \RR^n$ is $\mathbb{E}(u\cdot X)^4$, where $X \in \RR^n$ is
the random variable distributed according to the input distribution. The moment function can be estimated from
the samples. The independent components of the distribution correspond to local maxima or minima of the
moment function, and can be approximately found by finding the local maxima/minima of the moment function
estimated from the sample.


More information on ICA including historical remarks can be found in \cite{ICAbook, BlindSS}.
Ideas similar to ICA have been used in
statistics in the context of projection pursuit since the mid-seventies.
It is not clear how to apply ICA to the simplex learning problem directly as there is no clear independence among the components. Let us
note that \noname{Frieze et al.~}\cite{FJK} allow certain kinds of dependencies among the components, however this does not appear to be useful
for learning simplices.

Learning intersections of half-spaces is a well-studied problem in learning theory. The problem
of PAC-learning intersections of even two half-spaces is open, and there is evidence that it is
hard at least for sufficiently large number of half-spaces: E.g., \noname{Klivans and Sherstov~}\cite{KlivansS09} prove
that learning intersections of $n^\epsilon$ half-spaces in $\RR^\dim$ (for constant $\epsilon>0$) is
hard under standard cryptographic assumptions (PAC-learning is possible, however, if one also
has access to a membership oracle in addition to random samples \cite{KwekP98}). Because of this, much effort has been expended
on learning when the distribution of random samples is some simple distribution,
see e.g. \cite{KlivansS07, Vempala10, VempalaFocs10} and references therein. This line of work makes substantial progress
towards the goal of learning intersections of $k$ half-spaces efficiently, however it falls short of
being able to do this in time polynomial in $k$ and $n$; in particular, these algorithms do not seem to be
able to learn simplices.  The distribution of samples in
these works is either the Gaussian distribution or the uniform distribution over a ball.
\noname{Frieze et al.~}\cite{FJK} and \noname{Goyal and Rademacher~}\cite{GR09} consider the uniform distribution over
the intersection. Note that this requires that the intersection be bounded. Note also that one
only gets positive samples in this case unlike other work on learning intersections of
half-spaces. The problem of learning convex bodies can also be thought of as learning a distribution or density estimation
problem for a special class of distributions.

\noname{Gravin et al.~}\cite{Gravin12} show how to reconstruct a polytope with $N$ vertices in $\RR^n$, given its first $O(nN)$ moments
in $(n+1)$ random directions. In our setting, where we have access to only a polynomial number of random samples, it's not clear
how to compute moments of such high orders to the accuracy required for the algorithm of \cite{Gravin12} even for simplices.

A recent and parallel work of \noname{Anandkumar et al.~}\cite{AnandTensor} is closely related to ours. They show that tensor decomposition
methods can be applied to low-order moments of various latent variable models to estimate their parameters. The latent variable 
models considered by them include Gaussian mixture models, hidden Markov models and latent Dirichlet allocations. The tensor 
methods used by them and the local optima technique we use seem closely related. One could view our work, as well as theirs, as 
showing that the method of moments along with existing algorithmic techniques can be applied to certain unsupervised learning problems.

\paragraph{Our results}

For clarity of the presentation, we use the following machine model for the running time: a random access machine that allows the following exact arithmetic operations over real numbers in constant time: addition, subtraction, multiplication, division and square root.

The estimation error is measured using total variation distance, denoted $d_{TV}$ (see Section \ref{sec:preliminaries}).

\begin{theorem} \label{thm:main}
There is an algorithm (Algorithm~\ref{alg:simplex} below) such that given access to random samples from a simplex
$S_{INPUT} \subseteq \RR^\dim $, with probability at least $1-\delta$ over the sample and the randomness of the algorithm, it outputs $n+1$ vectors that are the vertices of a simplex $S$ so that $d_{TV}(S, S_{INPUT}) \leq \eps$.
The algorithm runs in time polynomial in $\dim$, $1/\eps$ and $1/\delta$.
\end{theorem}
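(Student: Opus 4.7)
The plan is to mirror the local-search/ICA strategy of Frieze--Jerrum--Kannan, but replace their fourth-moment objective with a \emph{third}-moment objective that is naturally suited to a simplex (which, unlike a parallelepiped, is not centrally symmetric and therefore has a nontrivial third moment). First I would preprocess the samples by estimating the empirical mean and covariance and applying the corresponding affine transformation to bring $S_{INPUT}$ into approximate isotropic position; this reduces the problem to recovering the vertices of a nearly-standard isotropic simplex, after which inverting the preprocessing affine map recovers the original vertices. Classical sample bounds (Rudelson-type concentration for covariance estimation in a bounded body) give that $\poly(n, 1/\eps, 1/\delta)$ samples suffice to make this whitening step as accurate as needed.

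Next I would analyze the function $F(u) = \e[(u \cdot X)^3]$ over the unit sphere, where $X$ is uniform on the isotropic simplex. The core structural claim to prove is: $F$ has exactly $n+1$ local maxima on $S^{\dim-1}$, all of equal value, and each one points along the direction of a vertex of the isotropic simplex; moreover $F$ is ``strongly'' concave at these maxima in directions tangent to the sphere, so the Hessian is bounded away from zero by some $1/\poly(n)$ quantity. This can be done by exploiting the $S_{n+1}$ symmetry of the standard simplex, writing $F$ in coordinates aligned with the vertices, computing the gradient/Hessian explicitly, and classifying the critical points. Once this landscape result is in hand, the algorithm is: estimate $F$ by its empirical average $\hat F$ from samples, run gradient ascent (or a power-iteration-style fixed point) on $\hat F$ restricted to the unit sphere from a random starting point, and show that with probability $\Omega(1/n)$ it converges to a small neighborhood of some vertex direction. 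Repeating with $\poly(n, 1/\delta)$ random restarts, and clustering the approximate fixed points produced, recovers all $n+1$ vertex directions.

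Given accurate directions to the $n+1$ vertices in isotropic position, the vertices themselves are determined by a short linear-algebraic post-processing step (the vertices of the isotropic standard simplex lie at a fixed known distance from the origin along their vertex directions), and finally the inverse whitening map is applied to output the vertex set of $S$. Converting the resulting geometric error in vertex positions to total variation distance $d_{TV}(S, S_{INPUT})$ is straightforward: a $\poly(1/n, \eps)$ perturbation of each vertex changes the symmetric difference volume by $O(\eps)$, so tracking a $1/\poly(n, 1/\eps, 1/\delta)$ error budget through each stage gives the claimed bound.

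The main obstacle is the landscape analysis of $F$ on the sphere: one must show not only that the only local maxima of the \emph{population} objective are the vertex directions (no spurious optima), but also quantify the strict-concavity margin and the gradient lower bound outside a neighborhood of the vertices by an inverse polynomial in $n$. Without such a $\poly(n)$ margin, the perturbation $\hat F - F$ arising from finite samples could create spurious attractors and destroy the polynomial sample complexity. All other pieces -- whitening error, uniform concentration of $\hat F$ and its first two derivatives over the sphere via an $\eps$-net argument, convergence rate of gradient ascent on a strongly concave smooth function, and conversion of vertex-accuracy to total variation -- are standard once this quantitative landscape theorem is established.
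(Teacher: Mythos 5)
Your high-level pipeline (whiten to isotropic position, optimize the third moment over the sphere, random restarts plus coupon collecting, then convert vertex accuracy to total variation) matches the paper's. But the engine of your argument is different from the paper's, and it is exactly the part you yourself flag as the "main obstacle" that is missing: you need a \emph{quantitative} landscape theorem for $F(u)=\e[(u\cdot X)^3]$ --- an inverse-polynomial strict-concavity margin at each vertex direction, a gradient lower bound outside neighborhoods of the critical points, and uniform $\eps$-net control of $\hat F$ and its first two derivatives --- so that gradient ascent on the \emph{empirical} objective provably has no spurious attractors. None of this is established in your proposal, and it is not routine: the paper proves only the \emph{qualitative} statement that the vertex directions are the complete set of local maxima of the population objective (its Theorem~\ref{thm:maxima}, via Lagrange first- and second-order conditions and the symmetry of $p_3$), and it explicitly notes that this characterization is \emph{not} used in the algorithmic analysis. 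So as written, your proof has a genuine gap at its central step.

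The paper closes this gap by a different mechanism that avoids landscape analysis altogether. Rather than running generic gradient ascent, it uses the algebraic identity expressing $u^{(2)}$ (the coordinate-wise square of $u$ in the hidden basis) in terms of $\nabla m_3(u)$, $u\cdot\ones$ and $u\cdot u$ (Equation~\eqref{equ:squaring}); all terms on the right are computable in any rotated coordinate system. The resulting fixed-point iteration is, in the coordinates where the simplex is standard, \emph{exactly} $u(i+1)=(u(i)_1^2,\dotsc,u(i)_n^2)$ followed by normalization. Convergence then follows from tracking the ratio of the largest coordinate to the rest, which squares at every step and hence dominates after $O(\log(n/\delta))$ iterations; the sampling error in the gradient is absorbed additively into these ratio bounds (Lemma~\ref{lem:gradient}), with a random start giving a $(1+\delta')$ initial gap with high probability. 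This trajectory-level analysis is what makes the sample complexity and iteration count polynomial without ever needing a strict-saddle or strong-concavity margin for $\hat F$. If you want to salvage your route, you would have to actually prove the quantitative landscape theorem (including for the perturbed empirical objective); otherwise you should replace gradient ascent by the squaring fixed-point iteration and adopt its direct analysis.
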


As mentioned earlier, our algorithm uses ideas from ICA. 
Our algorithm uses the third moment instead of the fourth moment used in certain versions of ICA. The third moment is not useful for learning symmetric bodies such as the cube as it is identically $0$. 
It is however useful for learning a simplex where it provides useful information, and is easier to handle than the fourth moment.
One of the main contributions of our work is the understanding of the third moment of a simplex and the structure of local maxima. This is more involved than in previous work as the simplex has no obvious independence structure, and the moment polynomial one gets has no obvious structure unlike for ICA.

The probability of success of the algorithm can be ``boosted'' so that the dependence of the running time on $\delta$ is only linear in $\log(1/\delta)$ as follows: The following discussion uses the space of simplices with total variation distance as the underlying metric space. Let $\eps$ be the target distance. Take an algorithm that succeeds with probability $5/6$ and error parameter $\eps'$ to be fixed later (such as Algorithm \ref{alg:simplex} with $\delta = 1/6$). Run the algorithm $t = O(\log 1/\delta)$ times to get $t$ simplices. By a Chernoff-type argument, at least $2t/3$ simplices are within $\eps'$ of the input simplex with probability at least $1- \delta/2$.

By sampling, we can estimate the distances between all pairs of simplices with additive error less than $\eps'/10$ in time polynomial
in $t, 1/\epsilon'$ and $\log{1/\delta}$ so that all estimates are correct with probability at least $1-\delta/2$.
For every output simplex, compute the number of output simplices within estimated distance $(2+1/10)\eps'$. With probability at least
$1-\delta$ both of the desirable events happen, and then necessarily there is at least one output simplex, call it $S$, that has $2t/3$ output simplices within estimated distance $(2+1/10)\eps'$. Any such $S$ must be within $(3+2/10) \eps'$ of the input simplex. Thus, set $\eps'=\eps/(3+2/10)$.

While our algorithm for learning simplices uses techniques for ICA, we have to do 
substantial work to make those techniques work for the simplex problem. 
We also show a more direct connection between the problem of learning a simplex and ICA: a randomized reduction from the problem of learning a simplex to ICA. The connection is based on a known representation of the uniform measure on a simplex as a normalization of a vector having independent coordinates. Similar representations are known for the uniform measure in an $n$-dimensional $\ell_p$ ball (denoted $\ell_p^n$) \cite{barthe2005probabilistic} and the \emph{cone measure} on the boundary of an $\ell_p^n$ ball \cite{schechtman1990volume, rachev1991approximate, MR1396997} (see Section \ref{sec:preliminaries}  for the definition of the cone measure). 
These representations lead to a reduction from the problem of learning an affine transformation of an $\ell_p^n$ ball to ICA. These reductions show connections between estimation problems with no obvious independence structure and ICA. 
They also make possible the use of any off-the-shelf implementation of ICA. 
However, the results here do not supersede our result for learning simplices because to our knowledge no rigorous analysis is available for 
the ICA problem when the distributions are the ones in the above reductions. 

\paragraph{Idea of the algorithm.}

 The new idea for the algorithm is that after putting
the samples in a suitable position (see below), the third moment of the sample can be used to recover the simplex using a simple 
FastICA-like algorithm. We outline our algorithm next.

As any full-dimensional simplex can be mapped to any other full-dimensional simplex by an invertible affine transformation, it is enough to determine the translation and linear transformation that would take the given simplex to some canonical simplex.
As is well-known for ICA-like problems (see, e.g., \cite{FJK}), 
this transformation can be determined \emph{up to a rotation} from the mean and the covariance matrix of the uniform distribution on the given simplex.
The mean and the covariance matrix can be estimated efficiently from a sample.
A convenient choice of an $n$-dimensional simplex is the convex hull of the canonical vectors in $\RR^{\dim+1}$.
We denote this simplex $\Delta_n$ and call it the \emph{standard simplex}.
So, the algorithm begins by picking an arbitrary invertible affine transformation $T$ that maps $\RR^\dim$ onto the hyperplane $\{x \in \RR^{\dim+1} \suchthat \ones \cdot x = 1 \}$. We use a $T$ so that $T^{-1}(\Delta_n)$ is an isotropic\footnote{See Section \ref{sec:preliminaries}.} simplex. In this case, the algorithm brings the sample set into isotropic position and embeds it in $\RR^{\dim+1}$ using $T$.
After applying these transformations we may assume (at the cost of small errors in the final result) that our sample set is obtained by sampling from an unknown rotation of the standard simplex
that leaves the all-ones vector (denoted $\ones$ from now on) invariant (thus this rotation keeps the center of mass of the standard simplex fixed), and the problem is to recover this rotation.


To find the rotation, the algorithm will find the vertices of the rotated simplex approximately. This can be done efficiently because of the following characterization of the vertices: Project the vertices of the simplex onto the hyperplane through the origin orthogonal to $\ones$ and normalize the resulting vectors. Let $V$ denote this set of $n+1$ points. Consider the problem of maximizing the third moment of the uniform distribution in the simplex along unit vectors orthogonal to $\ones$. Then $V$ is the complete set of local maxima and the complete set of global maxima (Theorem~\ref{thm:maxima}). A fixed point-like iteration (inspired by the analysis of FastICA~\cite{Hyvarinen99} and of gradient descent in \cite{NguyenR09}) starting from a random point in the unit sphere finds a local maximum efficiently with high probability. By the analysis of the coupon collector's problem, $O(\dim \log \dim)$ repetitions are highly likely to find all local maxima.

\paragraph{Idea of the analysis.}

In the analysis, we first argue that after putting the sample in isotropic position and mapping it through $T$, it is enough to analyze the algorithm in the case where the sample comes from a simplex $S$ that is close to a simplex $S'$ that is the result of applying a rotation leaving $\ones$ invariant to the standard simplex. The closeness here depends on the accuracy of the sample covariance and mean as an estimate of the input simplex's covariance matrix and mean. A sample of size $O(n)$ guarantees
(\cite[Theorem 4.1]{MR2601042}, \cite[Corollary 1.2]{1106.2775}) that the covariance and mean are close enough so that the uniform distributions on $S$ and $S'$ are close in total variation. We show that the subroutine that finds the vertices (Subroutine \ref{alg:vertex}), succeeds with some probability when given a sample from $S'$. By definition of total variation distance, Subroutine \ref{alg:vertex} succeeds with almost as large probability when given a sample from $S$ (an argument already used in \cite{NguyenR09}). As an additional simplifying assumption, it is enough to analyze the algorithm (Algorithm \ref{alg:simplex}) in the case where the input is isotropic, as the output distribution of the algorithm is equivariant with respect to affine invertible transformations as a function of the input distribution.

\paragraph{Organization of the paper.} 
Starting with some preliminaries in Sec.~\ref{sec:preliminaries}, we state some
results on the third moment of simplices in Sec.~\ref{sec:moment}. 
In Sec.~\ref{sec:standard} we give an algorithm that
estimates individual vertices of simplices in a special position; using this algorithm as a subroutine in Sec.~\ref{sec:general} we give
the algorithm for the general case. 
Sec.~\ref{sec:maxima} characterizes the set of local maxima of the third moment. 
Sec.~\ref{sec:prob} gives the probabilistic results underlying the reductions from learning simplices and $\ell_p^n$ balls to ICA.
Sec.~\ref{sec:reduction} explains those reductions.

\section{Preliminaries} \label{sec:preliminaries}
An $\dim$-simplex is the convex hull of $\dim+1$ points in $\RR^{\dim}$ that do not lie on an $(\dim-1)$-dimensional affine hyperplane.
It will be convenient to work with the standard $\dim$-simplex
$\Delta^{\dim}$ living in $\RR^{\dim+1}$ defined as the convex hull of the $\dim+1$ canonical unit
vectors $e_1, \ldots, e_{\dim+1}$; that is
\begin{align*}
\Delta^{\dim} = \{(x_0, \ldots, x_{\dim}) \in \RR^{\dim+1} &\suchthat x_0 + \dotsb + x_{\dim} = 1 \optionalbreak 
\text{ and } x_i \geq 0 \text{ for all } i\}.
\end{align*}
The canonical simplex $\Omega^\dim$ living in $\RR^\dim$ is given by
\begin{align*}
\{(x_0, \dotsc, x_{\dim-1}) \in \RR^{\dim} &\suchthat x_0 + \dotsb + x_{\dim-1} \leq 1 \optionalbreak
\text{ and } x_i \geq 0 \text{ for all } i\}.
\end{align*}
Note that $\Delta^\dim$ is the facet of $\Omega^{\dim+1}$ opposite to the origin.

Let $B_n$ denote the $n$-dimensional Euclidean ball.


The complete homogeneous symmetric polynomial of degree $d$ in variables $u_0, \ldots, u_{\dim}$, denoted $h_n(u_0, \ldots, u_{\dim})$,
is the sum of all monomials of degree $d$ in the variables:
\begin{align*}
h_d(u_0, \ldots, u_\dim)
&= \sum_{k_0 + \dotsb + k_\dim = d} u_0^{k_0} \dotsm u_\dim^{k_\dim} \optionalbreak
= \sum_{0 \leq i_0 \leq i_1 \leq \dotsb \leq i_d \leq \dim} u_{i_0} u_{i_1} \dotsm u_{i_d}.
\end{align*}

Also define the $d$-th power sum as
$$
p_d(u_0, \ldots, u_\dim) = u_0^d + \ldots + u_\dim^d.
$$

For a vector $u = (u_0, u_1, \ldots, u_\dim)$, we define
\[
u^{(2)} = (u_0^2, u_1^2, \ldots, u_\dim^2).
\]
Vector $\ones$ denotes the all ones vector (the dimension of the vector will be clear from the context).

A random vector $X \in \RR^\dim$ is \emph{isotropic} if $\e(X)=0$ and $\e(X X^T) = I$. A compact set in $\RR^\dim$ is isotropic if a uniformly distributed random vector in it is isotropic. The inradius of an isotropic $n$-simplex is $\sqrt{(n+2)/n}$, the circumradius is $\sqrt{n(n+2)}$.

The total variation distance between two probability measures is $d_{TV}(\mu, \nu) = \sup_A \abs{\mu(A) - \nu(A)}$ for measurable $A$. For two compact sets $K, L \subseteq \RR^\dim$, we define the total variation distance $d_{TV}(K, L)$ as the total variation distance between the corresponding uniform distributions on each set. It can be expressed as
\[
    d_{TV}(K,L) =
    \begin{cases}
        \frac{\vol{K \setminus L}}{\vol K} & \text{if $\vol K \geq \vol L$}, \\
        \frac{\vol{L \setminus K}}{\vol L} & \text{if $\vol L > \vol K$.}
    \end{cases}
\]
This identity implies the following elementary estimate:
\begin{lemma}\label{lem:bmtv}
Let $K, L$ be two compact sets in $\RR^\dim$. Let $0 < \alpha \leq 1 \leq \beta$ such that
$
\alpha K \subseteq L \subseteq \beta K$.
Then $d_{TV}(K,L) \leq 2\left(1-(\alpha/\beta)^\dim\right)$.
\end{lemma}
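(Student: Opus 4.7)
The plan is to prove the slightly stronger bound $d_{TV}(K,L) \leq 1 - (\alpha/\beta)^\dim$, from which the claimed factor-of-two bound is immediate. The starting point is the explicit formula for $d_{TV}$ of uniform distributions on compact sets stated just above the lemma, which I would apply in the two cases $\vol K \geq \vol L$ and $\vol L > \vol K$ separately.

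In the first case ($\vol K \geq \vol L$), the key observation is that since $\alpha \leq 1$, the set $\alpha K$ lies inside both $K$ and $L$: the inclusion in $L$ is by hypothesis, and the inclusion in $K$ is because $\alpha K \subseteq K$ whenever $\alpha \leq 1$. Thus $\alpha K \subseteq K \cap L$, which yields $\vol(K \cap L) \geq \alpha^\dim \vol K$. Plugging into $\vol(K \setminus L) = \vol K - \vol(K \cap L)$ and dividing by $\vol K$ gives $d_{TV}(K,L) \leq 1 - \alpha^\dim$.

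For the second case ($\vol L > \vol K$), I would argue by symmetry rather than redo the computation. Rewriting the chain $\alpha K \subseteq L \subseteq \beta K$ as $(1/\beta) L \subseteq K \subseteq (1/\alpha) L$ and setting $\alpha' = 1/\beta$, $\beta' = 1/\alpha$ gives $0 < \alpha' \leq 1 \leq \beta'$ with the crucial identity $\alpha'/\beta' = \alpha/\beta$. The case-1 argument, applied with $K$ and $L$ interchanged, now yields $d_{TV}(K,L) \leq 1 - (1/\beta)^\dim$.

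To finish, I would observe that $\alpha \leq 1$ and $\beta \geq 1$ imply both $\alpha \geq \alpha/\beta$ and $1/\beta \geq \alpha/\beta$, so in either case the bound $1 - (\alpha/\beta)^\dim$ dominates, and the claimed $2(1-(\alpha/\beta)^\dim)$ follows a fortiori. There is no genuine obstacle here: the only substantive step is the containment $\alpha K \subseteq K \cap L$, and the rest is the symmetry trick that lets a single volumetric estimate cover both volume regimes.
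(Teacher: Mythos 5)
Your argument is correct and in fact gives a sharper conclusion than the paper does: you obtain $d_{TV}(K,L) \leq 1-(\alpha/\beta)^\dim$, without the factor of $2$. The route is genuinely different. The paper's proof evaluates the single exact quantity $d_{TV}(\alpha K,\beta K)=1-(\alpha/\beta)^\dim$ and then pays for two legs of a triangle inequality (essentially $K\to\alpha K\to L$, each leg bounded by $1-(\alpha/\beta)^\dim$), which is exactly where its factor of $2$ comes from. You instead lower-bound $\vol(K\cap L)$ directly via the containment $\alpha K\subseteq K\cap L$, and dispose of the opposite volume regime by rewriting the sandwich as $(1/\beta)L\subseteq K\subseteq(1/\alpha)L$, which preserves the ratio $\alpha/\beta$; this buys the better constant at essentially no extra cost. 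One caveat, which applies equally to both proofs and so is not a gap relative to the paper: the step ``$\alpha K\subseteq K$ whenever $\alpha\leq 1$'' (and, in your second case, $(1/\beta)L\subseteq L$) is not valid for an arbitrary compact set --- it requires $K$ to be star-shaped about the origin. The stated hypothesis $\alpha K\subseteq L\subseteq\beta K$ only forces $(\alpha/\beta)K\subseteq K$ and hence $0\in K$, not star-shapedness. The paper's one-line proof needs the same nesting $\alpha K\subseteq K\subseteq\beta K$ to bound its first triangle-inequality leg, and in every application in the paper $K$ and $L$ are convex bodies containing $B_\dim$, so the assumption is harmless; still, if you want your write-up to be airtight as stated, you should either add star-shapedness to the hypotheses or note explicitly that it holds where the lemma is invoked.
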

\begin{proof}
We have $d_{TV}(\alpha K, \beta K) = 1-(\alpha/\beta)^\dim$. Triangle inequality implies the desired inequality.
\end{proof}

\begin{lemma}\label{lem:coupons}
Consider the coupon collector's problem with $n$ coupons where every coupon occurs with probability at least $\alpha$. Let $\delta >0$. Then with probability at least $1-\delta$ all coupons are collected after $\alpha^{-1} (\log n + \log 1/\delta)$ trials.
\end{lemma}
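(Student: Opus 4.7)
The plan is a standard union bound argument, just with the probability lower bound $\alpha$ replacing the uniform $1/n$ from the classical coupon collector setting. Fix an arbitrary coupon $i \in \{1, \dotsc, n\}$. By hypothesis, each trial produces coupon $i$ with probability at least $\alpha$ independently of the other trials, so the probability that coupon $i$ fails to appear in any of the first $T$ trials is at most $(1-\alpha)^T$. Using the elementary inequality $1 - \alpha \leq e^{-\alpha}$ this is bounded by $e^{-\alpha T}$.

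Next I would union-bound over the $n$ coupons: the probability that at least one coupon is still uncollected after $T$ trials is at most $n e^{-\alpha T}$. Setting $T = \alpha^{-1}(\log n + \log 1/\delta)$ makes $n e^{-\alpha T} = n \cdot e^{-\log n - \log 1/\delta} = \delta$, so the complementary event (all coupons collected) has probability at least $1 - \delta$, which is exactly what is claimed.

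There is really no obstacle here beyond routine bookkeeping. The only minor subtlety is that $T$ as defined is not necessarily an integer, but the bound is monotone in $T$, so one can replace $T$ by $\lceil T \rceil$ without loss. Also, note the hypothesis is consistent only when $n\alpha \leq 1$ (otherwise the individual probabilities cannot sum to at most one), but the argument itself never uses this, so no case split is needed.
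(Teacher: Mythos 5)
Your argument is correct and is essentially identical to the paper's proof: bound the failure probability of a fixed coupon by $(1-\alpha)^T \leq e^{-\alpha T} = \delta/n$ at the stated $T$, then union bound over the $n$ coupons. The side remarks about integrality of $T$ and the constraint $n\alpha \leq 1$ are fine but not needed.
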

\begin{proof}
The probability that a particular coupon is not collected after that many trials is at most
\[
(1-\alpha)^{\alpha^{-1} (\log n + \log 1/\delta)} \leq e^{-\log n - \log 1/\delta} = \delta/n.
\]
The union bound over all coupons implies the claim.
\end{proof}

For a point $x \in \RR^{\dimension}$, $\norm{x}_p = (\sum_{i = 1}^{\dimension} |x_i|^p)^{1/p}$ is the standard $\ell_p$ norm.  The unit $\ell_p^n$ ball is defined by 
\[
B_p^{\dimension} = \{x \in \RR^{\dimension}: \norm{x}_p \leq 1 \}.
\]

The Gamma distribution is denoted as $\mathrm{Gamma}(\alpha, \beta)$ and has density $f(x; \alpha, \beta) = \frac{\beta^\alpha}{\Gamma(\alpha)} x^{\alpha - 1} e^{- \beta x } 1_{x \geq 0}$, 
with shape parameters $\alpha, \beta >0$.
$\mathrm{Gamma}(1, \lambda)$ is the exponential distribution, denoted $\expdist{\lambda}$. 
The Gamma distribution also satisfies the following additivity property: If $X$ is distributed as $\mathrm{Gamma}(\alpha, \beta)$ and $Y$ is distributed as $\mathrm{Gamma}(\alpha', \beta)$, then $X+Y$ is distributed as $\mathrm{Gamma}(\alpha + \alpha', \beta)$.

The cone measure on the surface $\partial K$ of centrally symmetric convex body $K$ in $\RR^{\dimension}$ \cite{barthe2005probabilistic, schechtman1990volume, rachev1991approximate, MR1396997} is 
defined by
\[
\mu_{K}(A) = \frac{\vol(ta; a \in A, 0 \leq t \leq 1)}{\vol(K)}.
\] 
It is easy to see that $\mu_{B_p^{\dimension}}$ is uniform on $\partial B_p^{\dimension}$ for $p \in \{1, 2, \infty\}$.

From \cite{schechtman1990volume} and \cite{rachev1991approximate} we have the following representation of the cone measure on $\partial B_p^n$:
\begin{theorem}\label{thm:cone}
Let $G_1, G_2, \dotsc, G_n$ be iid random variables with density proportional to $\exp(-\abs{t}^p)$. Then the random vector $X = G/\norm{G}_p$
is independent of $\norm{G}_p$. Moreover, $X$ is distributed according to $\mu_{B_p^{\dimension}}$.
\end{theorem}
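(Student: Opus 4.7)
My plan is to compute the law of $(X,R) := (G/\norm{G}_p,\,\norm{G}_p)$ by pushing forward the joint density of $G$ through an $\ell_p$ polar change of variables and observing that the resulting density factors as a product of a function of $R$ alone and a function of $X$ alone; independence of $X$ and $R$ and identification of the marginal of $X$ then both drop out.

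First, since the $G_i$ are i.i.d.\ with density proportional to $\exp(-\abs{t}^p)$, the joint density of $G$ on $\RR^n$ is proportional to $\prod_i \exp(-\abs{g_i}^p) = \exp(-\norm{g}_p^p)$. In particular, it is \emph{radial with respect to the $\ell_p$-norm}: it depends on $g$ only through $\norm{g}_p$.

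Second, I would establish the $\ell_p$ polar decomposition of Lebesgue measure on $\RR^n$: for every nonnegative measurable function $f$,
\[
\int_{\RR^n} f(g)\, dg \;=\; n\,\vol(B_p^n) \int_{\partial B_p^n} \int_0^\infty f(r x)\, r^{n-1}\, dr\, d\mu_{B_p^n}(x).
\]
This is the $\ell_p$ analogue of the standard spherical coordinate formula and is essentially built into the definition of the cone measure. I would verify it on indicators of ``$p$-annular cones'' $\{t x : a \leq t \leq b,\ x \in A\}$ for measurable $A \subseteq \partial B_p^n$ and $0 \leq a \leq b$: by the definition of $\mu_{B_p^n}$ together with the scaling $\vol(t B_p^n) = t^n \vol(B_p^n)$, the left-hand side equals $(b^n-a^n)\vol(B_p^n)\mu_{B_p^n}(A)$, which matches the right-hand side after direct computation of the $r$-integral. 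A standard monotone-class / approximation argument extends the identity from such indicators to all nonnegative measurable~$f$.

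Combining these two observations, for any measurable $A \subseteq \partial B_p^n$ and Borel $I \subseteq (0,\infty)$,
\[
\Pr\bigl(X \in A,\ R \in I\bigr) \;=\; c_n\, n\,\vol(B_p^n)\,\mu_{B_p^n}(A) \int_I e^{-r^p} r^{n-1}\, dr,
\]
where $c_n$ is the normalizing constant of the density of $G$. The joint law of $(X,R)$ is therefore a product measure, which yields both conclusions of the theorem at once: $X$ and $R$ are independent, and taking $I=(0,\infty)$ (together with the fact that both marginals must have total mass~$1$) identifies the law of $X$ as $\mu_{B_p^n}$. The only nontrivial step is the $\ell_p$ polar decomposition with $\mu_{B_p^n}$ as the angular factor; once that is in hand, the independence and identification of the marginal are immediate from the factorization $\exp(-\norm{g}_p^p) = \exp(-r^p)$ along the radial coordinate.
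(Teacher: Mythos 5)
Your argument is correct. Note that the paper does not prove this statement at all --- it imports it from Schechtman--Zinn and Rachev--R\"uschendorf --- and your proof is precisely the standard one from those sources: the $\ell_p$-polar integration formula with $\mu_{B_p^n}$ as angular factor (correctly verified on annular cones via the definition of the cone measure and the scaling $\vol(tB_p^n)=t^n\vol(B_p^n)$), combined with the fact that the density $\exp(-\norm{g}_p^p)$ depends only on the radial coordinate, so the joint law of $(X,R)$ factors and both conclusions follow at once.
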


From \cite{barthe2005probabilistic}, we also have the following variation, a representation of the uniform distribution in $B_p^n$:
\begin{theorem} \label{theorem:fullnaor} 
Let $G = (G_1, \dotsc, G_{\dimension})$ be iid random variables with density proportional to $\exp(-|t|^p)$. Let $Z$ be a random variable distributed as $\expdist{1}$, independent of $G$. Then the random vector
\[
V = \frac{G}{\bigl( \sum_{i=1}^{\dimension} \abs{G_i}^p + Z\bigr)^{1/p}}
\]
is uniformly distributed in $B_p^n$.
 \end{theorem}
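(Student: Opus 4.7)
The plan is to derive Theorem \ref{theorem:fullnaor} from Theorem \ref{thm:cone} together with a short Gamma-Beta computation. Write $V = R A$ where $A = G/\norm{G}_p$ and $R = \norm{V}_p = \norm{G}_p/(\norm{G}_p^p + Z)^{1/p}$. By Theorem \ref{thm:cone}, $A$ is distributed according to the cone measure $\mu_{B_p^n}$ on $\partial B_p^n$ and is independent of $\norm{G}_p$; since $Z$ is independent of $G$, it is independent of $(A, \norm{G}_p)$, so $A$ is independent of the pair $(\norm{G}_p, Z)$, and in particular of $R$.

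The next step is to identify the distribution of $R$. A one-line change of variables shows that if $\abs{G_i}$ has density proportional to $\exp(-t^p)$ on $[0,\infty)$, then $\abs{G_i}^p$ is distributed as $\mathrm{Gamma}(1/p, 1)$. By the Gamma additivity recalled in the preliminaries, $S := \sum_{i=1}^n \abs{G_i}^p$ is distributed as $\mathrm{Gamma}(n/p, 1)$. Since $Z$ is distributed as $\mathrm{Gamma}(1, 1)$ and independent of $S$, the standard Beta-Gamma relation says that $S/(S+Z)$ has the Beta distribution with parameters $(n/p, 1)$, i.e.\ density $(n/p) t^{n/p - 1}$ on $[0,1]$; a further change of variables then gives that $R = (S/(S+Z))^{1/p}$ has density $n r^{n-1}$ on $[0,1]$.

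Finally, I would verify a polar-type factorization for the cone measure: whenever $R$ and $A$ are independent with $R$ having density $n r^{n-1}$ on $[0,1]$ and $A$ distributed as $\mu_{B_p^n}$, the product $R A$ is uniform in $B_p^n$. This follows directly from the definition of the cone measure in Section \ref{sec:preliminaries}: for any Borel $A' \subseteq \partial B_p^n$ and $t \in [0,1]$, the set $\{r a \suchthat 0 \leq r \leq t,\, a \in A'\}$ equals $t \cdot \{r a \suchthat 0 \leq r \leq 1,\, a \in A'\}$, so it has volume $t^n \vol(B_p^n) \mu_{B_p^n}(A')$, which matches the product of $\vol(B_p^n)$ and the probability that $R \leq t$ and $A \in A'$. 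Such sets generate the Borel sigma-algebra on $B_p^n$, so $V$ is uniform.

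The main obstacle is less conceptual than notational: making sure the Gamma-to-Beta-to-$R$ exponent chain lands exactly on the density $n r^{n-1}$ demanded by the cone-measure polar decomposition. An alternative self-contained route avoids Theorem \ref{thm:cone} entirely via a direct change of variables $(G, Z) \mapsto (V, W)$ with $W := (\norm{G}_p^p + Z)^{1/p}$; one computes the Jacobian and observes that the joint density of $(G, Z)$, proportional to $\exp\bigl(-\sum_i \abs{G_i}^p - z\bigr)$, becomes proportional to $W^{n+p-1} \exp(-W^p)$ on $V \in B_p^n$, which factors into the uniform density on $B_p^n$ times a density in $W$ alone.
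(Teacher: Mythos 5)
Your proof is correct, but note that the paper does not actually prove this statement: it is quoted verbatim from \cite{barthe2005probabilistic} (their Theorem 1), so there is no in-paper argument to match. Your main route---factor $V = RA$ with $A = G/\norm{G}_p$ handled by Theorem \ref{thm:cone}, then identify the law of $R$ via $\mathrm{Gamma}(1/p,1)$ additivity and the Beta--Gamma relation, and finish with the polar factorization that the cone measure is designed to satisfy---is a genuinely different and more modular derivation than the one in the cited reference, which (like your stated alternative) proceeds by a direct change of variables on the joint density of $(G,Z)$. Each step checks out: $\abs{G_i}^p \sim \mathrm{Gamma}(1/p,1)$, so $S \sim \mathrm{Gamma}(n/p,1)$, $S/(S+Z) \sim \mathrm{Beta}(n/p,1)$, and the $1/p$-th power lands exactly on the density $n r^{n-1}$; the independence of $A$ from $(\norm{G}_p, Z)$ follows from Theorem \ref{thm:cone} plus the independence of $Z$ from $G$; and the sets $\{ra \suchthat 0 \le r \le t,\ a \in A'\}$ form a $\pi$-system generating the Borel sets of $B_p^n$ (up to the null set $\{0\}$), so matching probabilities on them suffices. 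What the cone-measure route buys is that it reuses Theorem \ref{thm:cone} as a black box and isolates the radial law cleanly; what the direct Jacobian computation buys is self-containedness, and it is the style of argument the paper itself uses in Lemma \ref{lemma:independence} for the companion independence statement. Either would be an acceptable proof to include.
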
 

 See e.g. \cite[Section 20]{MR1324786} for the change of variable formula in probability.

\section{Computing the moments of a simplex}\label{sec:moment}
The $k$-th moment $m_k(u)$ over $\Delta^\dim$ is the function
\[
u \mapsto \mathbb{E}_{X \in \Delta_\dim}((u\cdot X)^k).
\]
In this section we present a formula for the moment over $\Delta^\dim$.
Similar more general formulas appear in \cite{LasserreA}.
We will use the following result from \cite{GrundmannM78} for $\alpha_i\geq 0$:
\[
\int_{\Omega^{\dim+1}} x_0^{\alpha_0} \dotsm x_\dim^{\alpha_\dim} dx = \frac{\alpha_0! \dotsm \alpha_\dim!}{(\dim+1+ \sum_i \alpha_i)!}.
\]

From the above we can easily derive a formula for integration over $\Delta^\dim$:
$$
\int_{\Delta^\dim} x_0^{\alpha_0} \dotsm x_\dim^{\alpha_\dim} dx = \sqrt{n+1} \cdot \frac{\alpha_0! \dotsm \alpha_\dim!}{(\dim + \sum_i \alpha_i)!}.
$$

Now
\begin{align*}
\int_{\Delta^\dim} &(x_0 u_0 + \ldots + x_\dim u_\dim)^k dx \\
&= \sum_{k_0 + \dotsb + k_\dim = k} \binom{k}{k_0!, \ldots, k_\dim!} u_0^{k_0} \ldots u_\dim^{k_\dim} \int_{\Delta^\dim} x_0^{k_0} \ldots x_\dim^{k_\dim} dx \\
&= \sum_{k_0 + \dotsb + k_\dim = k} \binom{k}{k_0!, \ldots, k_\dim!} u_0^{k_0} u_0^{k_0} \ldots u_\dim^{k_\dim}
\frac{ \sqrt{\dim+1} \cdot k_0! \ldots k_\dim!}{(\dim + \sum_i k_i)!} \\
&= \frac{k! \sqrt{\dim+1}}{(\dim+k)!}  \sum_{k_0 + \dotsb + k_\dim = k} u_0^{k_0} \ldots u_\dim^{k_\dim} \\
&= \frac{k! \sqrt{\dim+1}}{(\dim+k)!} \; h_k(u).
\end{align*}

The variant of Newton's identities for the complete homogeneous symmetric polynomial gives the following relations which
can also be verified easily by direct computation:
$$ 3 h_3(u) = h_2(u) p_1(u) + h_1(u) p_2(u) + p_3(u),$$
$$ 2 h_2(u) = h_1(u) p_1(u) + p_2(u) = p_1(u)^2 + p_2(u).$$

Divide the above integral by the volume of the standard simplex $|\Delta_n|=\sqrt{\dim+1}/{\dim!}$
to get the moment:
\begin{eqnarray*}
m_3(u) & = & \frac{3! \sqrt{\dim+1}}{(\dim+3)!} h_3(u) / |\Delta_n| \\
       & = & \frac{2 (h_2(u) p_1(u) + h_1(u) p_2(u) + p_3(u))}{(\dim+1)(\dim+2)(\dim+3)}  \\
       & = & \frac{(p_1(u)^3 + 3 p_1(u) p_2(u) + 2 p_3(u))}{(\dim+1)(\dim+2)(\dim+3)} .
\end{eqnarray*}

\section{Subroutine for finding the vertices of a rotated standard simplex} \label{sec:standard}
In this section we solve the following simpler problem: Suppose we have $\text{poly}(\dim)$ samples from a rotated copy
$S$ of the standard simplex, where the rotation is such that it leaves $\ones$ invariant. The problem
is to approximately estimate the vertices of the rotated simplex from the samples.

We will analyze our algorithm in the coordinate system in which the input simplex is the standard simplex. This is only
for convenience in the analysis and the algorithm itself does not know this coordinate system.


As we noted in the introduction, our algorithm is inspired by the algorithm of \noname{Nguyen and Regev~}\cite{NguyenR09} for
the related problem of learning hypercubes and also by the FastICA algorithm in \cite{Hyvarinen99}. New ideas are
needed for our algorithm for learning simplices; in particular, our update rule is different. With the right update
rule in hand the analysis turns out to be quite similar to the one in \cite{NguyenR09}.

We want to find local maxima of the sample third moment. A natural approach to do this would be to use gradient
descent or Newton's method (this was done in \cite{FJK}). Our algorithm, which only uses first order information, can
be thought of as a fixed point algorithm leading to a particularly simple analysis and fast convergence. Before stating
our algorithm we describe the update rule we use.

We will use the abbreviation $C_\dim = (\dim+1)(\dim+2)(\dim+3)/6$. Then, from the expression for $m_3(u)$ we get
\[
\nabla m_3(u) =  \frac{1}{6C_\dim} \left(3 p_1(u)^2 \ones + 3 p_2(u) \ones + 6 p_1(u) u + 6 u^{(2)}\right).
\]
Solving for $u^{(2)}$ we get
\begin{align}
u^{(2)} &= C_\dim \nabla m_3(u) - \frac{1}{2} p_1(u)^2 \ones - \frac{1}{2} p_2(u) \ones -  p_1(u) u \nonumber \\
       &=  C_\dim  \nabla m_3(u) - \frac{1}{2} (u \cdot \ones)^2 \ones - \frac{1}{2} (u \cdot u)^2 \ones - (u \cdot \ones) u.\label{equ:squaring}
\end{align}

While the above expressions are in the coordinate system where the input simplex is the canonical simplex, the
important point is that all terms in
the last expression can be computed in any coordinate system that is obtained by a rotation leaving $\ones$ invariant.
Thus, we can compute $u^{(2)}$ as well independently of what coordinate system we are working in. This immediately
gives us the algorithm below.
We denote by $\hat{m}_3(u)$ the sample third moment, i.e., $\hat{m}_3(u)=\frac{1}{t}\sum_{i=1}^{t}(u \cdot r_i)^3$ for $t$ samples.
This is a polynomial in $u$, and the gradient is computed in the obvious way. Moreover, the gradient of the sample moment is clearly an unbiased estimator of the gradient of the moment; a bound on the deviation is given in the analysis (Lemma \ref{lem:gradient}).
For each evaluation of the gradient of the sample moment, we use a fresh sample.

It may seem a bit alarming that the fixed point-like iteration is squaring the coordinates of $u$, leading to an extremely fast growth (see Equation \ref{equ:squaring} and Subroutine 1). But, as in other algorithms having quadratic convergence like certain versions of Newton's method, the convergence is very fast and the number of iterations is small. We show below that it is $O(\log(n/\delta))$, leading to a growth of $u$ that is polynomial in $n$ and $1/\delta$. The boosting argument described in the introduction makes the final overall dependence in $\delta$ to be only linear in
$\log (1/\delta)$.

We state the following subroutine for $\mathbb{R}^n$ instead of $\mathbb{R}^{n+1}$ (thus it is learning a rotated copy of
$\Delta^{n-1}$ instead of $\Delta^n$). This is for notational convenience so that we work with $n$ instead of $n+1$.

\begin{subroutine}
\caption{Find one vertex of a rotation of the standard simplex $\Delta^{n-1}$ via a fixed point iteration-like algorithm}\label{alg:vertex}
\begin{algorithmic}
\State Input: Samples from a rotated copy of the $n$-dimensional standard simplex (for a rotation that leaves $\ones$ invariant).
\State Output: An approximation to a uniformly random vertex of the input simplex.
\vspace{.2em}
\hrule
\vspace{.2em}
\State  Pick $u(1) \in S^{\dim-1}$, uniformly at random.
\myFor{$i=1$ to $r$}
\begin{align*}
u(i+1) := &C_{\dim-1} \nabla \hat{m}_3(u(i)) - \frac{1}{2} (u(i) \cdot \ones)^2 \ones \optionalbreak
- \frac{1}{2} (u(i) \cdot u(i))^2 \ones - (u(i) \cdot \ones) u(i).
\end{align*}
\State Normalize $u(i+1)$ by dividing by $\norm{u(i+1)}_2$.
\myEndFor
\State Output $u(r+1)$.
\end{algorithmic}
\end{subroutine}

\begin{lemma}\label{lem:gradient}
Let $c>0$ be a constant, $\dim > 20$, and $0<\delta<1$. Suppose that Subroutine 1 uses a sample of size
$t = 2^{17}\dim^{2c+22}(\frac{1}{\delta})^2\ln\frac{2\dim^5 r}{\delta}$ for each evaluation of the gradient and runs for
$r =\log \frac{4(c+3) n^2\ln{n}}{\delta}$
iterations. Then with probability at least $1-\delta$
Subroutine 1 outputs a vector within distance $1/n^c$ from a vertex
of the input simplex.  With respect of the process of picking a sample and running the algorithm, each vertex is equally likely to be the nearest.

\end{lemma}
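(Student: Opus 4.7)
The plan is to analyze Subroutine~\ref{alg:vertex} in two stages: first study the idealized iteration that uses the exact gradient $\nabla m_3$, then control the sampling error incurred by replacing $\nabla m_3$ with $\nabla \hat{m}_3$. The update rule is covariant under rotations of $\RR^\dim$ that fix $\ones$ (the vector field $\nabla m_3$ rotates with the distribution, while $(u\cdot\ones)^2\ones$, $(u\cdot u)^2\ones$ and $(u\cdot\ones)u$ are built from $\ones$-invariants), so I may assume without loss of generality that the input is the standard simplex $\Delta^{\dim-1}$ in canonical coordinates. Under this assumption, equation~\eqref{equ:squaring} identifies the exact pre-normalization update with the coordinatewise squaring map $u\mapsto u^{(2)}$. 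Iterating and renormalizing at each step produces a unit vector $u(r+1)$ with $u(r+1)_i \propto u(1)_i^{2^r}$. Setting $j := \arg\max_i u(1)_i^2$ and $\rho := \max_{k\neq j} u(1)_k^2 / u(1)_j^2$, a direct calculation yields
\[
\|u(r+1)-e_j\|_2^2 \;\leq\; 2(\dim-1)\,\rho^{2^r},
\]
reducing the idealized analysis to the claim that $\rho$ is bounded away from $1$ with high probability.

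Writing $u(1) = g/\|g\|_2$ for $g\sim N(0,I_\dim)$, a pairwise anti-concentration estimate on the order statistics of $g_1^2,\dotsc,g_\dim^2$ together with a union bound will give $\rho \leq 1 - 1/\poly(\dim/\delta)$ with probability at least $1-\delta/2$; the stated choice $r = \log(4(c+3)\dim^2\ln\dim/\delta)$ then drives the idealized distance-to-vertex below $1/(2\dim^c)$. By the exchangeability of $g$, each coordinate is the argmax with probability exactly $1/\dim$, which yields the uniform-vertex conclusion. For the sampling error, each summand $3(u(i)\cdot r_k)^2 r_k$ appearing in $\nabla\hat m_3(u(i))$ has coordinates of size $O(1)$, since $\|u(i)\|_2=1$ and $r_k$ lies in a rotated $\Delta^{\dim-1}$. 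Hoeffding's inequality applied coordinatewise, union-bounded over the $\dim$ coordinates and $r$ iterations, shows that with probability at least $1-\delta/2$ the bound $\|C_{\dim-1}(\nabla\hat m_3(u(i)) - \nabla m_3(u(i)))\|_2 \leq \dim^{-c-O(1)}$ holds for every $i$, provided $t$ is at least the value prescribed in the lemma.

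The main obstacle is propagating these per-step perturbations through the iteration without losing the signal. I would couple the sample iterate $u(i)$ to the idealized iterate $v(i)$ started from the same $u(1)$: the derivative of $u\mapsto u^{(2)}$ at a unit vector has operator norm $2\|u\|_\infty\leq 2$, and the renormalization step amplifies perturbations by at most $1/\|u^{(2)}\|_2 \leq \sqrt{\dim}$, so an inductive bound of the form $\|u(i+1)-v(i+1)\|_2 \leq O(\sqrt{\dim})\,\|u(i)-v(i)\|_2 + O(\sqrt{\dim})\,\|\xi(i)\|_2$ will hold, where $\xi(i)$ denotes the gradient error at step $i$. Over the $r=O(\log\dim)$ iterations this amplifies $\max_i\|\xi(i)\|_2$ by at most a $\poly(\dim)$ factor, which the $\dim^{-c-O(1)}$ per-step accuracy absorbs; once the iterate enters the super-attracting basin of $e_j$, the quadratic contraction of the squaring map damps subsequent errors further. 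Combining $\|v(r+1)-e_j\|_2 \leq 1/(2\dim^c)$ with the coupling bound $\|u(r+1)-v(r+1)\|_2 \leq 1/(2\dim^c)$ via the triangle inequality gives the desired $1/\dim^c$ guarantee, and a union bound over the two failure events yields the overall success probability $1-\delta$.
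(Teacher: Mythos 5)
Your overall architecture matches the paper's: pass to the coordinate system where the input is the standard simplex, observe that the exact update is coordinatewise squaring, obtain an initial gap $\rho \le 1-1/\poly(\dim/\delta)$ from pairwise Gaussian anti-concentration, control each gradient evaluation by a Chernoff/Hoeffding bound with fresh samples, and get the uniform-vertex claim by symmetry. The divergence --- and the genuine gap --- is in how you propagate the gradient errors through the iteration.

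Your coupling recursion $\norm{u(i+1)-v(i+1)}_2\le O(\sqrt{\dim})\,\norm{u(i)-v(i)}_2+O(\sqrt{\dim})\,\norm{\xi(i)}_2$ compounds to an amplification of order $(c\sqrt{\dim})^{r}$ over the run. Since $r=\Theta(\log(\dim^2\ln\dim/\delta))$, this is $\dim^{\Theta(\log \dim)}$, i.e.\ quasi-polynomial, not the ``$\poly(\dim)$ factor'' you assert; the polynomial sample size prescribed in the lemma only buys per-step accuracy $1/\poly(\dim)$, which cannot absorb such a blowup. The parenthetical appeal to quadratic contraction ``once the iterate enters the super-attracting basin'' is precisely the missing quantitative content: you would have to show the basin is reached before the linearized error estimate has exploded, and your bound does not give that. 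The paper avoids coupling to an idealized trajectory altogether (following Nguyen--Regev): it tracks the ratios $|u(i)_1|/|u(i)_j|$ of the dominant coordinate to every other coordinate directly through the \emph{noisy} iteration. These ratios are invariant under the renormalization step, and the analysis shows that (i) while $u(i)_j^2$ is non-negligible the ratio at least squares up to a multiplicative loss $(1-1/\dim^{c_3})^2$, and (ii) once $u(i)_j^2$ is negligible the ratio is already at least $\frac12 \dim^{c_2-c_3-1}$ and remains so. The losses therefore accumulate only as $(1-1/\dim^{c_3})^{2^{r+1}}$, which is absorbed by taking $c_3\approx\log(32/\delta')/\log \dim$, and no additive error is ever exponentiated. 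To close your proof you would need either this ratio-tracking argument or a genuinely quantified two-phase (pre-basin/post-basin) analysis in place of the single worst-case Lipschitz bound.
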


Note that if we condition on the sample, different vertices are not equally likely over the randomness of the algorithm. That is, if we try to find all vertices running the algorithm multiple times on a fixed sample, different vertices will be found with different likelihoods.
\begin{proof}
Our analysis has the same outline as that of \noname{Nguyen and Regev~}\cite{NguyenR09}.
This is because the iteration that we get is the same as that of \cite{NguyenR09} except that cubing is replaced by
squaring (see below); however some details in our proof are different.
In the proof below, several of the inequalities are quite loose and are so chosen to make the computations
simpler.

We first prove the lemma assuming that the gradient computations are exact and then show how to handle samples.
We will carry out the analysis in the coordinate system where the given simplex is the standard simplex. This is
only for the purpose of the analysis, and this coordinate system is not known to the algorithm.
Clearly, $u(i+1) =  (u(i)^2_1, \ldots, u(i)_{\dim}^2)$.
It follows that, $$u(i+1) = (u(1)_1^{2^i}, \ldots, u(1)_{\dim}^{2^i}).$$
Now, since we choose $u(1)$ randomly, with probability at least $(1-(n^2-n)\delta')$
 one of the
coordinates of $u(1)$ is greater than all the other coordinates in absolute value by a factor of at least $(1+ \delta')$,
where $0 < \delta' < 1$.
(A similar argument is made in \cite{NguyenR09} with different parameters.
 We briefly indicate the proof for our case:
The probability that the event in question does not happen is less than the probability that there are two coordinates $u(1)_a$
and $u(1)_b$ such that their absolute values are within factor $1+\delta'$, i.e.
$1/(1+\delta') \leq |u(1)_a|/|u(1)_b| < 1+\delta'$. The probability that for given $a, b$ this event happens can be seen as the
Gaussian area of the four sectors (corresponding to the four choices of signs of $u(1)_a, u(1)_b$) in the plane each with angle
less than $2\delta'$. By symmetry, the Gaussian volume of these sectors is $2\delta'/(\pi/2) < 2\delta'$.
The probability that
such a pair $(a,b)$ exists is less than $2 \binom{n}{2} \delta'$.)
Assuming this happens, then after $r$ iterations, the ratio between the largest coordinate (in absolute value) and the absolute value of any other coordinate is at least
$(1+\delta')^{2^r}$.
Thus, one of the coordinates is very
close to $1$ and others are very close to $0$, and so $u(r+1)$ is very close to a vertex of the input simplex.

Now we drop the assumption that the gradient is known exactly. For each evaluation of the gradient we use
a fresh subset of samples of $t$ points. Here $t$ is chosen so that each evaluation of the gradient is within $\ell_2$-distance
$1/\dim^{c_1}$ from its true value with probability at least $1-\delta''$, where $c_1$ will be set at the end of the proof.
An application of the Chernoff bound yields that we can take
$t = 200\dim^{2c_1+4}\ln\frac{2\dim^3}{\delta''}$; we omit the details. Thus all the $r$ evaluations of the gradient are
within distance $1/\dim^{c_1}$ from their true values with probability at least $1-r\delta''$.

We assumed that our starting vector $u(1)$ has a coordinate greater than every other coordinate by a factor
of
$(1+\delta')$ in absolute value; let us assume without loss of generality that this is the first coordinate.
Hence $|u(1)_1| \geq 1/\sqrt{n}$.
When expressing $u^{(2)}$ in terms of the gradient, the gradient gets multiplied by $C_{n-1} < n^3$ (we are assuming $n>20$),
keeping this in mind and letting $c_2 = c_1-3$ we get for $j \neq 1$
\begin{align*}
\frac{|u(i+1)_1|}{|u(i+1)_j|} \geq \frac{u(i)_1^2-1/n^{c_2}}{u(i)_j^2+1/n^{c_2}} \geq \frac{u(i)_1^2 (1-n^{-(c_2-1)})}{u(i)_j^2+1/n^{c_2}}.
\end{align*}

If $u(i)_j^2 > 1/\dim^{c_2-c_3}$, where $1\leq c_3 \leq c_2 -2$ will be determined later, then we get
\begin{align} \label{eq:if}
|u(i+1)_1|/|u(i+1)_j|
&> \frac{1-1/n^{c_2-1}}{1+1/n^{c_3}} \cdot \left(\frac{u(i)_1}{u(i)_j}\right)^2 \nonumber\\
&>  (1-1/n^{c_3})^2 \left(\frac{u(i)_1}{u(i)_j}\right)^2.
\end{align}

Else,
\begin{align*}
|u(i+1)_1|/|u(i+1)_j|
&> \frac{1/n-1/n^{c_2}}{1/n^{c_2-c_3}+1/n^{c_2}} \\
&> \left(1-\frac{1}{n^{c_3}}\right)^2 \cdot n^{c_2-c_3 -1} \\
&> \frac{1}{2} n^{c_2-c_3-1},
\end{align*}
where we used $c_3 \geq 1$ and $n>20$ in the last inequality.


We choose $c_3$ so that
\begin{align}\label{eq:c3}
\left(1-\frac{1}{n^{c_3}}\right)^2(1+\delta') > (1+\delta'/2).
\end{align}
For this, $\delta' \geq 32/n^{c_3}$ or equivalently $c_3 \geq (\ln{(32/\delta')})/\ln{n}$ suffices.

For $c_3$ satisfying \eqref{eq:c3} we have $(1-\frac{1}{n^{c_3}})^2(1+\delta')^2 > (1+\delta')$.
It then follows from \eqref{eq:if} that
the first coordinate continues to remain the largest in absolute value by a factor of at least $(1+\delta')$ after each iteration.
Also, once we have $|u(i)_1|/|u(i)_j| > \frac{1}{2} n^{c_2-c_3-1}$, we
have  $|u(i')_1|/|u(i')_j| > \frac{1}{2} n^{c_2-c_3-1}$ for all $i'>i$.

\eqref{eq:if} gives that after $r$ iterations we have
\begin{align*}
\frac{|u(r+1)_1|}{|u(r+1)_j|}
&>  (1-1/n^{c_3})^{2+2^2+\ldots+2^r} \left(\frac{u(1)_1}{u(1)_j}\right)^{2^r} \\
&\geq (1-1/n^{c_3})^{2^{r+1}-2} (1+\delta')^{2^r}.
\end{align*}

Now if $r$ is such that $(1-1/n^{c_3})^{2^{r+1}-2} (1+\delta')^{2^r} > \frac{1}{2} n^{c_2-c_3-1}$, we will be guaranteed that
$|u(r+1)_1|/|u(r+1)_j| > \frac{1}{2} n^{c_2-c_3-1}$. This condition is satisfied if we have
$(1-1/n^{c_3})^{2^{r+1}} (1+\delta')^{2^r} > \frac{1}{2} n^{c_2-c_3-1}$, or equivalently
$((1-1/n^{c_3})^2 (1+\delta'))^{2^r} \geq \frac{1}{2} n^{c_2-c_3-1}$.
Now using \eqref{eq:c3} it suffices to choose $r$ so that
$(1+\delta'/2)^{2^r} \geq \frac{1}{2} n^{c_2-c_3-1}$. Thus we can take $r = \log(4(c_2-c_3)(\ln{n})/\delta')$.


Hence we get $|u(r+1)_1|/|u(r+1)_j| > \frac{1}{2} n^{c_2-c_3-1}$.
It follows that for
$u(r+1)$, the $\ell_2$-distance from the vertex $(1, 0, \ldots, 0)$ is at most $8/n^{c_2-c_3-2} < 1/n^{c_2-c_3-3}$ for $\dim > 20$; we omit
easy details.

Now we set our parameters: $c_3 = 1+(\ln(32/\delta')/\ln{n})$ and $c_2 - c_3 - 3 = c$ and
$c_1 = c_2 + 3 = 7 + c + \ln(32/\delta')/\ln{n}$ satisfies all the constraints we imposed on $c_1, c_2, c_3$.
Choosing $\delta'' = \delta'/r$, we get that the procedure succeeds with probability at least
$1-(\dim^2-\dim)\delta' - r\delta'' > 1-\dim^2\delta'$. Now setting $\delta'=\delta/n^2$ gives the overall probability of error $\delta$,
and the number of samples and iterations as claimed in the lemma.
\end{proof}

\section{Learning simplices} \label{sec:general}

In this section we give our algorithm for learning general simplices, which uses the subroutine from the previous section.
The learning algorithm uses an affine map $T:\RR^{\dim} \to \RR^{\dim+1}$ that maps some isotropic simplex to the standard simplex.
We describe now a way of constructing such a map: Let $A$ be a matrix having as columns an orthonormal basis of $\ones^\perp$ in $\RR^{\dim+1}$.
To compute one such $A$, one can start with the $(n+1)$-by-$(n+1)$ matrix $B$ that has ones in the diagonal and first column, everything else is zero.
Let $QR=B$ be a QR-decomposition of $B$. By definition we have that the first column of $Q$ is parallel to $\ones$ and the rest of the columns span $\ones^\perp$.
Given this, let $A$ be the matrix formed by all columns of $Q$ except the first.
We have that the set $\{A^T e_i\}$ is the set of vertices of a regular $n$-simplex.
Each vertex is at distance
\[
\sqrt{\left(1-\frac{1}{n+1}\right)^2 + \frac{n}{(n+1)^2}} = \sqrt{\frac{n}{n+1}}
\]
from the origin, while an isotropic simplex has vertices at distance $\sqrt{n(n+2)}$ from the origin. So an affine transformation that maps an isotropic simplex in $\RR^\dim$ to the standard simplex in $\RR^{\dim+1}$ is $T(x) = \frac{1}{\sqrt{(n+1)(n+2)}} A x + \frac{1}{n+1} \ones_{\dim+1}$.


\begin{algorithm}
\caption{Learning a simplex.}\label{alg:simplex}
\begin{algorithmic}
\State Input: Error parameter $\eps>0$. Probability of failure parameter $\delta>0$. Oracle access to random points from some $n$-dimensional simplex $S_{INPUT}$.

\State Output: $V =\{v(1), \dotsc, v(\dim+1)\} \subseteq \RR^{\dim}$ (approximations to the vertices of the simplex).
\vspace{.2em}
\hrule
\vspace{.2em}
\State Estimate the mean and covariance using $t_1=\poly(n,1/\eps, 1/\delta)$ samples $p(1), \dotsc, p(t_1)$:
\[
\mu = \frac{1}{t_1} \sum_i p(i),
\]
\[
\Sigma = \frac{1}{t_1} \sum_i (p(i)-\mu) (p(i)-\mu)^T.
\]
\State Compute a matrix $B$ so that $\Sigma = BB^T$ (say, Cholesky decomposition).

\State Let $U = \emptyset$.
\myFor{$i=1$ to $m$ (with $m = \poly (n, \log 1/\delta))$}

\State Get $t_3=\poly(n, 1/\eps, \log 1/\delta)$ samples $r(1),\dotsc r(t_3)$ and use $\mu, B$ to map them to samples $s(i)$ from a nearly-isotropic simplex: $s(i) = B^{-1}(r(i)-\mu)$.

\State Embed the resulting samples in $\RR^{\dim+1}$ as a sample from an approximately rotated standard simplex:
Let $l(i) = T(s(i))$.

\State Invoke Subroutine \ref{alg:vertex} with sample $l(1), \dotsc, l(t_3)$ to get $u \in \RR^{\dim+1}$.
\State Let $\tilde u$ be the nearest point to $u$ in the affine hyperplane $\{x \suchthat x \cdot \ones = 1\}$. If $\tilde u$ is not within $1/\sqrt{2}$ of a point in $U$, add $\tilde u$ to $U$. (Here $1/\sqrt{2}$ is half of the edge length of the standard simplex.)
\myEndFor
\State Let
\begin{align*}
V &= B T^{-1}(U) + \mu \optionalbreak
= \sqrt{(n+1)(n+2)} B A^T\left(U-\frac{1}{n+1}\ones \right) + \mu.
\end{align*}
\end{algorithmic}
\end{algorithm}

To simplify the analysis, we pick a new sample $r(1), \dotsc, r(t_3)$ to find every vertex, as this makes every vertex equally likely to be found when given a sample from an isotropic simplex. (The core of the analysis is done for an isotropic simplex; this is enough as the algorithm's first step is to find an affine transformation that puts the input simplex in approximately isotropic position. The fact that this approximation is close in total variation distance implies that it is enough to analyze the algorithm for the case of exact isotropic position, the analysis carries over to the approximate case with a small loss in the probability of success. See the proof below for the details.) A practical implementation may prefer to select one such sample outside of the for loop, and find all the vertices with just that sample---an analysis of this version would involve bounding the probability that each vertex is found (given the sample, over the choice of the starting point of gradient descent) and a variation of the coupon collector's problem with coupons that are not equally likely.

\begin{proof}[\myproof of Theorem~\ref{thm:main}]
As a function of the input simplex, the distribution of the output of the algorithm is equivariant under invertible affine transformations.
Namely, if we apply an affine transformation to the input simplex, the distribution of the output is equally transformed.\footnote{To see this: the equivariance of the algorithm as a map between distributions is implied by the equivariance of the algorithm on any given input sample. Now, given the input sample, if we apply an affine transformation to it, this transformation is undone except possibly for a rotation by the step $s(i) = B^{-1}(r(i)-\mu)$. A rotation may remain because of the ambiguity in the characterization of $B$. But the steps of the algorithm that follow the definition of $s(i)$ are equivariant under rotation, and the ambiguous rotation will be removed at the end when $B$ is applied again in the last step.}
The notion of error, total variation distance, is also invariant under invertible affine transformations.
Therefore, it is enough to analyze the algorithm when the input simplex is in isotropic position.
In this case $\norm{p(i)} \leq n+1$ (see Section \ref{sec:preliminaries}) and we can set $t_1 \leq \poly(n,1/\eps', \log(1/\delta))$ so that $\norm{\mu} \leq \eps'$ with probability at least $1-\delta/10$ (by an easy application of Chernoff's bound), for some $\eps'$ to be fixed later.
Similarly, using results from 
\cite[Theorem 4.1]{MR2601042}, a choice of $t_1 \leq n {\eps'}^{-2} \polylog(1/\eps') \polylog(1/\delta)$ implies that the empirical second moment matrix \[\bar \Sigma = \frac{1}{t_1} \sum_i p(i) p(i)^T\] satisfies $\norm{\bar \Sigma - I} \leq \eps'$ with probability at least $1-\delta/10$. We have $\Sigma = \bar \Sigma - \mu \mu^T$ and this implies $\norm{\Sigma - I} \leq \norm{\bar \Sigma - I} + \norm{\mu \mu^T} \leq 2\eps'$.
Now, $s(1), \dotsc, s(t_3)$ is an iid sample from a simplex $S'=B^{-1}(S_{INPUT}-\mu)$. Simplex $S'$ is close in total variation distance to some isotropic simplex\footnote{The isotropic simplex $S_{ISO}$ will typically be far from the (isotropic) input simplex, because of the ambiguity up to orthogonal transformations in the characterization of $B$.} $S_{ISO}$. More precisely, Lemma~\ref{lem:tv} below shows that
\begin{equation}\label{equ:tv}
d_{TV}(S', S_{ISO}) \leq 12 \dim \eps',
\end{equation}
with probability at least $1-\delta/5$.

Assume for a moment that $s(1), \dotsc, s(t_3)$ are from $S_{ISO}$. The analysis of Subroutine \ref{alg:vertex} (fixed point-like iteration) given in Lemma~\ref{lem:gradient} would guarantee the following: Successive invocations to Subroutine~\ref{alg:vertex} find approximations to vertices of $T(S_{ISO})$ within Euclidean distance $\eps''$ for some $\eps''$ to be determined later and $t_3 = \poly(\dim,1/\eps'', \log 1/\delta)$. We ask for each invocation to succeed with probability at least $1-\delta/(20m)$ with $m = n (\log n + \log 20/\delta)$. Note that each vertex is equally likely to be found. The choice of $m$ is so that, if all $m$ invocations succeed (which happens with probability at least $1-\delta/20$), then the analysis of the coupon collector's problem, Lemma~\ref{lem:coupons}, implies that we fail to find a vertex with probability at most $\delta/20$. Overall, we find all vertices with probability at least $1-\delta/10$.

But in reality samples $s(1), \dotsc, s(t_3)$ are from $S'$, which is only \emph{close} to $S_{ISO}$. The estimate from \eqref{equ:tv} with appropriate $\eps' = \poly(1/n, \eps'', \delta)$ gives
\[
d_{TV}(S', S_{ISO}) \leq \frac{\delta}{10} \frac{1}{ t_3 m},
\]
which implies that the total variation distance between the joint distribution of all $t_3 m$ samples used in the loop and the joint distribution of actual samples from the isotropic simplex $S_{ISO}$ is at most $\delta/10$, and this implies that the loop finds approximations to all vertices of $T(S_{ISO})$ when given samples from $S'$ with probability at least $1-\delta/5$. The points in $U$ are still within Euclidean distance $
\eps''$ of corresponding vertices of $T(S_{ISO})$.

To conclude, we turn our estimate of distances between estimated and true vertices into a total variation estimate, and map it back to the input simplex. Let $S''=\conv T^{-1} U$. As $T$ maps an isotropic simplex to a standard simplex, we have that $\sqrt{(n+1)(n+2)} T$ is an isometry, and therefore the vertices of $S''$ are within distance $\eps''/\sqrt{(n+1)(n+2)}$ of the corresponding vertices of $S_{ISO}$. Thus, the corresponding support functions are uniformly within \[\eps''' = \eps''/\sqrt{(n+1)(n+2)}\] of each other on the unit sphere. This and the fact that $S_{ISO} \supseteq B_n$ imply
\[
(1 - \eps''') S_{ISO}\subseteq S'' \subseteq (1 + \eps''')S_{ISO}.
\]
Thus, by Lemma \ref{lem:bmtv}, $d_{TV}(S'', S_{ISO}) \leq
1 - (\frac{1-\eps'''}{1+\eps'''})^\dim \leq 1-(1-\eps''')^{2n}\leq 2n \eps''' \leq 2\eps''$
and this implies that the total variation distance between the uniform distributions on $\conv V$ and the input simplex is at most $2 \eps''$. Over all random choices, this happens with probability at least $1-2\delta/5$. We set $\eps'' = \eps/2$.
\end{proof}

\begin{lemma}\label{lem:tv}
Let $S_{INPUT}$ be an $\dim$-dimensional isotropic simplex. Let $\Sigma$ be an $\dim$-by-$\dim$ positive definite matrix such that $\norm{\Sigma - I} \leq \eps < 1/2$. Let $\mu$ be an $n$-dimensional vector such that $\norm{\mu} \leq \eps$. Let $B$ be an $n$-by-$n$ matrix such that $\Sigma = BB^T$. Let $S$ be the simplex $B^{-1}(S_{INPUT}-\mu)$. Then there exists an isotropic simplex $S_{ISO}$ such that $d_{TV}(S, S_{ISO}) \leq 6 \dim \eps$.
\end{lemma}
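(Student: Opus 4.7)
The plan is to identify an isotropic simplex $S_{ISO}$ close to $S$ via the polar decomposition of $B$, and then bound $d_{TV}(S,S_{ISO})$ by sandwiching $S$ between scaled copies of $S_{ISO}$ so that Lemma~\ref{lem:bmtv} applies.

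First I would perform the polar decomposition $B = PN$, where $P$ is orthogonal and $N = \Sigma^{1/2}$ is symmetric positive definite. Since the eigenvalues of $\Sigma$ lie in $[1-\eps, 1+\eps]$, those of $N$ and $N^{-1}$ lie in $[\sqrt{1-\eps}, \sqrt{1+\eps}]$ and $[(1+\eps)^{-1/2}, (1-\eps)^{-1/2}]$ respectively, and for $\eps<1/2$ a direct estimate yields $\|N-I\| \leq \eps$ and $\|N^{-1}-I\| \leq \eps$. Next, define $S_{ISO} := P^T S_{INPUT}$; since $P$ is orthogonal and $S_{INPUT}$ is isotropic, $S_{ISO}$ is also isotropic. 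Substituting $B^{-1} = N^{-1}P^T$ yields $S = N^{-1}(S_{ISO} - \mu')$ with $\mu' := P^T \mu$ and $\|\mu'\| = \|\mu\| \leq \eps$.

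I would then produce sandwich constants $\alpha \leq 1 \leq \beta$ with $\alpha S_{ISO} \subseteq S \subseteq \beta S_{ISO}$. For the upper inclusion, each $y \in S_{ISO}$ is sent to $N^{-1}(y - \mu')$, which differs from $y$ by a vector of norm at most $\|N^{-1}-I\|\cdot\|y\| + \|N^{-1}\|\cdot\|\mu'\| \leq \eps R + O(\eps)$, where $R = \sqrt{n(n+2)}$ is the circumradius of $S_{ISO}$. Combined with $r B_n \subseteq S_{ISO}$ (inradius $r = \sqrt{(n+2)/n}$, so $R/r = n$) and the convexity of $S_{ISO}$, this yields $S \subseteq (1 + C\eps n) S_{ISO}$ for a small constant $C$. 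The lower inclusion follows symmetrically by observing that any point of $(1-C\eps n) S_{ISO}$ sits at distance at least $C\eps n\cdot r$ from $\partial S_{ISO}$, so a perturbation of size $O(\eps R + \eps)$ keeps it inside $N^{-1}(S_{ISO} - \mu')$. Lemma~\ref{lem:bmtv} applied to these $\alpha,\beta$ then yields the conclusion.

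The main obstacle is recovering the stated linear-in-$n$ constant $6n\eps$ rather than the quadratic bound $O(\eps n^2)$ delivered by a naive application of Lemma~\ref{lem:bmtv} with $\alpha,\beta = 1 \pm O(\eps n)$ (since $(\alpha/\beta)^n \approx 1 - O(\eps n^2)$). To sharpen, I expect one must split $d_{TV}(S, S_{ISO})$ via the triangle inequality into a translation piece (the shift by $-N^{-1}\mu'$, which costs only $O(n\|\mu'\|) = O(n\eps)$ because the surface-to-volume ratio of an isotropic simplex is $\Theta(n)$) and a linear piece ($d_{TV}(N^{-1} S_{ISO}, S_{ISO})$); the latter should then be analyzed by integrating the normal shift over $\partial S_{ISO}$ to exploit the specific simplex geometry rather than the crude Minkowski-sum inclusion that drives the $O(\eps n^2)$ bound.
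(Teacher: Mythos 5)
Your setup is the same as the paper's: the paper takes the SVD $B^{-1}=UDV^T$, defines the orthogonal approximation $R=UV^T$ (which is exactly your $P^T$, by uniqueness of the polar decomposition) and sets $S_{ISO}=RS_{INPUT}$, then finishes with Lemma~\ref{lem:bmtv}. The genuine gap is in the second half, and you have diagnosed it yourself: bounding the action of $N^{-1}$ \emph{additively} by $\norm{N^{-1}-I}$ times the circumradius and then converting back via the inradius forces sandwich constants $1\pm C\eps n$, and Lemma~\ref{lem:bmtv} then yields only $O(\eps n^2)$. The repair you sketch (splitting off the translation and integrating the normal shift over $\partial S_{ISO}$) is not carried out, so the proposal as written does not establish the claimed $6n\eps$ bound.

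The paper closes this differently: it never lets the circumradius enter the sandwich constants. It writes $S=UDV^T(S_{INPUT}-\mu)$ and sandwiches the linear distortion \emph{multiplicatively}, asserting $\sigma_{\min}\,UV^T(S_{INPUT}-\mu)\subseteq S\subseteq \sigma_{\max}\,UV^T(S_{INPUT}-\mu)$ with $\sigma_{\min},\sigma_{\max}\in[\sqrt{1-\eps},\sqrt{1+\eps}]$; the translation is then absorbed via $(1-\norm{\mu})S_{ISO}\subseteq S_{ISO}-R\mu\subseteq(1+\norm{\mu})S_{ISO}$, which uses only $B_n\subseteq S_{ISO}$ and convexity (write a point of $(1-\norm{\mu})S_{ISO}$ plus $R\mu$ as a convex combination of a point of $S_{ISO}$ and the unit vector $R\mu/\norm{\mu}$). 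The sandwich constants are thus $1\pm O(\eps)$, and the single factor of $n$ comes from the exponent in Lemma~\ref{lem:bmtv}, giving $2\bigl(1-((1-\eps)/(1+\eps))^{3n/2}\bigr)\leq 6n\eps$. If you want to complete your own write-up, the cleanest route is to justify that multiplicative containment rather than to integrate over the boundary; be aware that a containment of the form $\sigma_{\min}K\subseteq DK\subseteq\sigma_{\max}K$ is \emph{not} a generic fact for convex $K$ containing the origin (a symmetric contraction close to the identity need not map such a $K$ into itself), and the paper states it without further argument, so this is precisely the point that needs care.
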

\begin{proof}
We use an argument along the lines of the orthogonal Procrustes problem (nearest orthogonal matrix to $B^{-1}$, already in \cite[Proof of Theorem 4]{NguyenR09}): Let $U D V^T$ be the singular value decomposition of $B^{-1}$. Let $R = U V^T$ be an orthogonal matrix (that approximates $B^{-1}$). Let $S_{ISO} = R S_{INPUT}$.

We have $S = UDV^T (S_{INPUT} - \mu)$. Let $\sigma_{min}$, $\sigma_{max}$ be the minimum and maximum singular values of $D$, respectively. This implies:
\begin{align}
\sigma_{min} UV^T (S_{INPUT} - \mu) &\subseteq S \subseteq \sigma_{max}UV^T (S_{INPUT} - \mu), \nonumber \\
\sigma_{min} (S_{ISO} - R\mu)  &\subseteq S \subseteq \sigma_{max}( S_{ISO} - R\mu).\label{equ:isotropy}
\end{align}
As $S_{ISO} \supseteq B_n$, $\norm{\mu} \leq 1$, $R$ is orthogonal and $S_{ISO}$ is convex, we have
\[
S_{ISO}-R\mu \supseteq (1-\norm{\mu}) S_{ISO}.
\]
Also,
\begin{align*}
S_{ISO} - R\mu &\subseteq S_{ISO} + \norm{\mu} B_n \\
&\subseteq S_{ISO} (1+ \norm{\mu}).
\end{align*}
This in \eqref{equ:isotropy} gives
\[
\sigma_{min} (1-\norm{\mu}) S_{ISO} \subseteq S \subseteq \sigma_{max}(1+\norm{\mu}) S_{ISO}.
\]
This and Lemma \ref{lem:bmtv} imply
\[
d_{TV}(S, S_{ISO}) \leq 2\left(1- \left(\frac{\sigma_{min} (1-\norm{\mu}) }{ \sigma_{max}(1+\norm{\mu})}\right)^\dim\right).
\]
The estimate on $\Sigma$ gives $\sigma_{min} \geq \sqrt{1-\eps}$, $\sigma_{max} \leq \sqrt{1+\eps}$.  Thus
\begin{align*}
d_{TV}(S, S_{ISO}) &\leq 2\left(1- \left(\frac{1-\eps}{1+\eps} \right)^{3\dim/2}\right) \\
& \leq 2\left(1- \left(1-\eps\right)^{3\dim}\right) \\
& \leq 6\dim \eps.
\end{align*}
\end{proof}

\section{The local and global maxima of the 3rd moment of the standard simplex and the isotropic simplex} \label{sec:maxima}


In this section we study the structure of the set of local maxima of the third moment as a function of the direction (which happens to be essentially $u \mapsto \sum u_i^3$ as discussed in Section \ref{sec:moment}). This is not necessary for our algorithmic result, however it gives insight into the geometry of the third moment (the location of local maxima/minima and stationary points) and suggests that more direct optimization algorithms like gradient descent and Newton's method will also work, although we will not prove that.

\begin{theorem}\label{thm:maxima}
Let $K \subseteq \RR^\dim$ be an isotropic simplex. Let $X$ be random in $K$. Let $V = \{x_i\}_{i=1}^{\dim+1} \subseteq \RR^\dim$ be the set of normalized vertices of $K$. Then $V$ is a complete set of local maxima and a complete set of global maxima of $F:S^{\dim-1} \to \RR$ given by $F(u) = \e ((u \cdot X)^3)$.
\end{theorem}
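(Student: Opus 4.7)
The plan is to reduce to the standard simplex, classify stationary points via Lagrange multipliers, and apply a second-order test. Using the affine map $T$ from Section~\ref{sec:general} sending the isotropic simplex $K$ to the standard simplex $\Delta^\dim$, if $Y = T(X)$ is uniform on $\Delta^\dim$ and $v := Au \in \ones^\perp$ (with $A$ the matrix from Section~\ref{sec:general}), then $u \cdot X = \sqrt{(\dim+1)(\dim+2)}\,v\cdot Y$. Combined with the moment formula of Section~\ref{sec:moment} and the facts $p_1(v)=0$, $p_2(v)=1$, this yields
\[
F(u) = \beta\, p_3(v),\qquad \beta = \frac{2\sqrt{(\dim+1)(\dim+2)}}{\dim+3} > 0,
\]
where $v$ ranges over $M := S^\dim \cap \ones^\perp$ as $u$ ranges over $S^{\dim-1}$. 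Under this correspondence the normalized vertices of $K$ pull back to $v^{(j)} = \sqrt{(\dim+1)/\dim}\,(e_j - \tfrac{1}{\dim+1}\ones)$, so it suffices to show the local (equivalently, global) maxima of $p_3(v) = \sum_i v_i^3$ on $M$ are exactly $\{v^{(j)}\}_{j=1}^{\dim+1}$.

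Applying Lagrange multipliers gives the stationarity condition $3 v_i^2 - 2\mu v_i - \lambda = 0$, a fixed quadratic in each coordinate, so any critical $v$ takes at most two distinct values, with some $k \in \{1,\ldots,\dim\}$ coordinates equal to $a$ and the remaining $\dim+1-k$ equal to $b$. Solving the two constraints (choosing $a>0$ without loss of generality, since $v \mapsto -v$ only flips the sign of $p_3$) yields
\[
a = \sqrt{\tfrac{\dim+1-k}{k(\dim+1)}},\quad b = -\sqrt{\tfrac{k}{(\dim+1-k)(\dim+1)}},\quad F_k := p_3(v) = \frac{\dim+1-2k}{\sqrt{k(\dim+1-k)(\dim+1)}}.
\]
The case $k=1$ recovers precisely the $v^{(j)}$, with value $F_1 = (\dim-1)/\sqrt{\dim(\dim+1)}$. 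To identify the global maximum, substitute $t = k/(\dim+1)$ and set $h(t) = (1-2t)^2/[t(1-t)]$; computing $h'(t) = -(1-2t)/[t(1-t)]^2$ shows $h$ is strictly decreasing on $(0,1/2]$, strictly increasing on $[1/2,1)$, and symmetric about $t=1/2$, so over $t \in \{1/(\dim+1),\ldots,\dim/(\dim+1)\}$, $F_k^2$ is maximized only at $k \in \{1,\dim\}$. Since $F_1 > 0 > F_\dim$, the global maximum value of $p_3$ on $M$ is $F_1$, attained precisely on $\{v^{(j)}\}$.

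For the local-maximum claim, I would apply a second-order test on $M$. For a unit tangent $w$ at a critical $v^*$ (so $\sum w_i = \sum v_i^* w_i = 0$, $\norm{w}=1$), a Taylor expansion of $\phi(t) := p_3\bigl((v^* + tw)/\norm{v^* + tw}\bigr)$ about $t=0$, using $\norm{v^*+tw}^2 = 1+t^2$, gives
\[
\phi''(0) = 6\sum_i v_i^* w_i^2 - 3\, p_3(v^*).
\]
At a $k=1$ critical point, the two tangency constraints and the two-value structure force $w_1 = 0$, so $\sum v_i^* w_i^2 = b$ and $\phi''(0) = 6b - 3F_1 = -3(\dim+1)/\sqrt{\dim(\dim+1)} < 0$: every tangent direction is strictly descending, so each $v^{(j)}$ is a strict local maximum. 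For $k \geq 2$, relabel so coordinates $1,2$ both lie in the $a$-block and take $w = (e_1 - e_2)/\sqrt{2}$; this is tangent to $M$, yields $\sum v_i^* w_i^2 = a$, and $\phi''(0) = 6a - 3F_k = 3(\dim+1)/\sqrt{k(\dim+1-k)(\dim+1)} > 0$, an ascending direction. Hence no $k\geq 2$ critical point is a local maximum, and combined with the global-max analysis, $\{v^{(j)}\}_{j=1}^{\dim+1}$ is exactly the complete set of local (equivalently, global) maxima of $p_3$ on $M$, and therefore of $F$ on $S^{\dim-1}$. The principal obstacle will be purely bookkeeping: tracking sign choices in the $(a,b)$ solution and matching $S_{\dim+1}$-orbits of critical configurations to the right value of $k$, after which the remaining algebra is routine.
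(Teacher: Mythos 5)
Your proposal is correct and follows essentially the same route as the paper: reduce to maximizing $p_3$ on $S^{\dim}\cap\ones^{\perp}$, use the Lagrange condition (convex $=$ affine) to force the two-value structure $(a,\dotsc,a,b,\dotsc,b)$, and kill every candidate with two or more $a$-coordinates via a positive second derivative in the direction $e_1-e_2$ (your $\phi''(0)=6\sum v_i^*w_i^2-3p_3(v^*)$ is exactly the paper's restricted Hessian form $\sum z_i^2(2v_i-\lambda_2)$ up to the factor $3$). The only cosmetic difference is how global maximality is concluded --- you compare the explicit values $F_k$ across $k$, while the paper notes that the surviving $k=1$ candidates all share one value and a global maximum must exist by compactness; both are valid, and your extra second-order check at $k=1$ is sound but not needed under the paper's argument.
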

\noindent\emph{Proof idea:} Embed the simplex in $\RR^{\dim+1}$. Show that the third moment is proportional to the complete homogeneous symmetric polynomial of degree 3, which for the relevant directions is proportional to the sum of cubes. To conclude, use first and second order optimality conditions to characterize the set of local maxima.
\begin{proof}
Consider the standard simplex
\[
\Delta^n = \conv \{e_1, \dotsc, e_{\dim+1} \} \subseteq \RR^{\dim+1}
 \]
and identify it with $V$ via a linear map $A :\RR^{\dim+1} \to \RR^{\dim}$ so that $A(\Delta^n) = V$. Let $Y$ be random in $\Delta^n$. Consider $G: S^{\dim} \to \RR$ given by $G(v) = m_3(v) = \e ((v \cdot Y)^3)$. Let $U = \{ v \in \RR^{\dim+1} \suchthat v \cdot \ones =0, \norm{v}=1 \}$ be the equivalent feasible set for the embedded problem.
We have $G(v) = cF(A v)$ for any $v \in U$ and some constant $c > 0$ independent of $v$.
To get the theorem, it is enough to show that the local maxima of $G$ in $U$ are precisely the normalized versions of the projections of the canonical vectors onto the hyperplane orthogonal to $\ones = (1, \dotsc, 1)$. According to Section \ref{sec:moment}, for $v \in U$ we have
\[
G(v) \propto p_3(v).
\]
Using a more convenient but equivalent constant, we want to enumerate the local maxima of the problem
\begin{equation}\label{equ:opt}
\begin{aligned}
\max \frac{1}{3} p_3(v)& \\
\text{s.t.} \quad v \cdot v &= 1 \\
v \cdot \ones &= 0 \\
v &\in \RR^{\dim+1}.
\end{aligned}
\end{equation}
The Lagrangian function is
\[
L(v, \lambda_1, \lambda_2) = \frac{1}{3} \sum_i v_i^3 - \lambda_1 \sum_i v_i - \lambda_2\frac{1}{2} \left(\biggl(\sum_i v_i^2 \biggr) -1 \right).
\]
The first order condition is $\nabla_v L = 0$, that is,
\begin{equation}\label{equ:foc}
v_i^2 = \lambda_1 + \lambda_2 v_i \quad \text{for $i= 1, \dotsc, \dim+1$.}
\end{equation}
Consider this system of equations on $v$ for any fixed $\lambda_1$, $\lambda_2$. Let $f(x) = x^2$, $g(x)= \lambda_1 + \lambda_2 x$. The first order condition says $f(v_i) = g(v_i)$, where $f$ is convex and $g$ is affine. That is, the $v_i$s can take at most two different values. As our optimization problem \eqref{equ:opt} is symmetric under permutation of the coordinates, we conclude that, after putting the coordinates of a point $v$ in non-increasing order, if $v$ is a local maximum of \eqref{equ:opt}, then $v$ must be of the form
\[
v = (a, \dotsc, a, b, \dotsc, b),
\]
where $a>0>b$ and there are exactly $\alpha$ $a$s and $\beta$ $b$s, for $\alpha, \beta \in \{1, \dotsc, \dim\}$.

We will now study the second order necessary condition (SONC) to eliminate from the list of candidates all vectors with $\alpha >1$. It is easy to see that the surviving vectors are exactly the promised scaled projections of the canonical vectors. This vectors must all be local and global maxima: At least one of them must be a global maximum as we are maximizing a continuous function over a compact set and all of them have the same objective value so all of them are local and global maxima.

The SONC at $v$ asks for the Hessian of the Lagrangian to be negative semidefinite when restricted to the tangent space to the constraint set at $v$ \cite[Section 11.5]{MR2423726}.
We compute the Hessian (recall that $v^{(2)}$ is the vector of the squared coordinates of $v$):
\[
\nabla_v L = v^{(2)} - \lambda_1 \ones - \lambda_2 v
\]
\[
\nabla^2_v L = 2 \diag(v) - \lambda_2 I
\]
where $\diag(v)$ is the $(\dim + 1)$-by-$(\dim+1)$ matrix having the entries of $v$ in the diagonal and 0 elsewhere.

A vector in the tangent space is any $z \in \RR^{\dim+1}$ such that $z \cdot \ones = 0$, $v \cdot z = 0$, and definiteness of the Hessian is determined by the sign of $z^T \nabla^2_v L z$ for any such $z$, where
\[
z^T \nabla^2_v L z = \sum_{i=1}^{\dim+1} z_i^2 (2 v_i - \lambda_2).
\]
Suppose $v$ is a critical point with $\alpha \geq 2$. To see that such a $v$ cannot be a local maximum, it is enough to show $2a > \lambda_2$, as in that case we can take $z = (1, -1, 0,\dotsc, 0)$ to make the second derivative of $L$ positive in the direction $z$.

In terms of $\alpha, \beta, a, b$, the constraints of \eqref{equ:opt} are $\alpha a + \beta b = 0$, $\alpha a^2 + \beta b^2 = 1$, and this implies $a = \sqrt{\frac{\beta}{\alpha (\dim+1) }}$, $b = - \sqrt{\frac{\alpha}{\beta (\dim+1) }}$. The inner product between the first order condition \eqref{equ:foc} and $v$ implies $\lambda_2 = \sum v_i^3 = \alpha a^3 + \beta b^3$. It is convenient to consider the change of variable $\gamma = \alpha/(\dim+1)$, as now candidate critical points are parameterized by certain discrete values of $\gamma$ in $(0,1)$. This gives $\beta = (1-\gamma)(\dim+1)$, $ a = \sqrt{(1-\gamma)/(\gamma(\dim+1))}$ and
\begin{align*}
\lambda_2 &= (\dim+1)\biggl[\gamma \left(\frac{1-\gamma}{\gamma (\dim+1)}\right)^{3/2} \\
&\qquad - (1-\gamma)\left(\frac{\gamma}{(1-\gamma) (\dim+1)}\right)^{3/2}\biggr] \\
&= \frac{1}{\sqrt{(\dim+1) \gamma (1-\gamma)}} \left[(1-\gamma)^2 - \gamma^2\right] \\
&= \frac{1}{\sqrt{(\dim+1) \gamma (1-\gamma)}} [1- 2\gamma].
\end{align*}
This implies
\begin{align*}
2 a - \lambda_2 &= \frac{1}{\sqrt{(\dim+1) \gamma (1-\gamma)}} [2(1-\gamma) -1 + 2\gamma] \\
    &= \frac{1}{\sqrt{(\dim+1) \gamma (1-\gamma)}}.
\end{align*}
In $(0,1)$, the function given by $\gamma \mapsto 2a-\lambda_2 = \frac{1}{\sqrt{(\dim+1) \gamma (1-\gamma)}}$ is convex and symmetric around $1/2$, where it attains its global minimum value, $2/ \sqrt{\dim+1}$, which is positive.
\end{proof}

\section{Probabilistic Results} \label{sec:prob} 

In this section we show the probabilistic results underlying the reductions from learning simplices and $\ell_p^n$ balls to ICA. The results are Theorems \ref{thm:simplexscaling} and \ref{thm:lpscaling}. They each show a simple non-linear rescaling of the respective uniform distributions that gives a distribution with independent components (Definition \ref{def:ic}).

Theorem \ref{thm:simplexscaling} below gives us, in a sense, a ``reversal'' of the representation of the cone measure on $\partial B_p^n$, seen in Theorem \ref{thm:cone}. Given any random point in the standard simplex, we can apply a simple non-linear scaling and recover a distribution with independent components.  

\begin{definition}\label{def:ic}
We say that a random vector $X$ has \emph{independent components} if it is an affine transformation of a random vector having independent coordinates.
\end{definition}

\begin{theorem}\label{thm:simplexscaling} 
Let $X$ be a uniformly random vector in the $(\dimension-1)$-dimensional standard simplex $\Delta_{n-1}$. Let $T$ be a random scalar distributed as $\mathrm{Gamma}(n, 1)$. Then the coordinates of $T X$ are iid as $\expdist{1}$.

Moreover, if $A:\mathbb{R}^{\dimension} \rightarrow \mathbb{R}^{\dimension}$ is an invertible linear transformation, then the random vector $TA(X)$ has independent components.
\end{theorem}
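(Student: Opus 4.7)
The plan is to realize the pair $(X, T)$ in the theorem as an explicit function of $n$ iid exponentials, which reduces the statement to a short change-of-variables computation followed by a uniqueness-of-joint-distribution argument.

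First I would let $Y_1, \dotsc, Y_n$ be iid $\expdist{1}$ and set $S = Y_1 + \dotsb + Y_n$. By the Gamma additivity property recorded in Section~\ref{sec:preliminaries}, $S \sim \mathrm{Gamma}(n, 1)$. The key step is the classical representation that $Y/S$ is uniformly distributed on $\Delta_{n-1}$ and independent of $S$. I would establish this by a direct change of variables: the map $\Phi:(Y_1, \dotsc, Y_n) \mapsto (Y_1/S, \dotsc, Y_{n-1}/S, S)$ on the positive orthant has Jacobian determinant equal to $S^{n-1}$ (a one-line computation using, e.g., a block-matrix determinant formula), so the product density $e^{-\sum_i Y_i} = e^{-S}$ pushes forward to a density proportional to $S^{n-1} e^{-S}$ on the image, which factors as a constant indicator of the canonical simplex in the first $n-1$ coordinates times the $\mathrm{Gamma}(n,1)$ density in the $S$ coordinate. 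This exhibits the desired product structure. Alternatively I could quote Theorem~\ref{thm:cone} with $p = 1$, passing from Laplace to exponential generators by taking absolute values and restricting to the positive orthant.

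Given this representation, the pair $(X, T)$ in the theorem statement has the same joint distribution as $(Y/S, S)$, since both consist of a uniform vector on $\Delta_{n-1}$ paired with an \emph{independent} $\mathrm{Gamma}(n, 1)$ scalar, and these two marginals together with independence determine the joint law. Hence $TX \stackrel{d}{=} S \cdot (Y/S) = Y$, whose coordinates are iid $\expdist{1}$, giving the first assertion. For the moreover part, linearity of $A$ gives $TA(X) = A(TX)$; since the coordinates of $TX$ are independent and $A$ is invertible linear, $TA(X)$ has independent components by Definition~\ref{def:ic}.

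No step here is a real obstacle; the only item needing a small calculation is the Jacobian used to identify $(Y/S, S)$ with an independent pair, and even this can be entirely bypassed by invoking Theorem~\ref{thm:cone}.
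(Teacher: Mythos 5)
Your proposal is correct and follows essentially the same route as the paper: the paper also realizes $(X,T)$ jointly as $(G/\norm{G}_1, \norm{G}_1)$ for iid exponential $G_i$ via Theorem~\ref{thm:cone} with $p=1$ (plus Gamma additivity for the law of $\norm{G}_1$), and then applies the map $(x,t)\mapsto xt$ to conclude $TX \stackrel{d}{=} G$, with the \emph{moreover} part handled identically by $TA(X)=A(TX)$. The only difference is that you additionally sketch a self-contained Jacobian computation for the representation, which the paper bypasses by citation; both are fine.
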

\begin{proof}
In the case where $p = 1$, Theorem \ref{thm:cone} restricted to the positive orthant implies that for random vector $G = (G_1, \dotsc, G_{\dimension})$, if each $G_i$ is an iid exponential random variable $\expdist{1}$, then $( G/\norm{G}_1, \norm{G}_1)$ has the same (joint) distribution as $(X,T)$. Given the measurable function $f(x,t) = xt$, $f(X,T)$ has the same distribution as $f( G/\norm{G}_1, \norm{G}_1)$. That is, $XT$ and $G$ have the same distribution\footnote{See \cite[Theorem 1.1]{MR1456629} for a similar argument in this context.}.

For the second part, we know $T A(X) = A(TX)$ by linearity. By the previous argument the coordinates of $TX$ are independent. This implies that $A(TX)$ has independent components.
\end{proof}

The next lemma complements the main result in \cite{barthe2005probabilistic}, Theorem 1 (Theorem \ref{theorem:fullnaor} elsewhere here). They show a representation of the uniform distribution in $B_p^n$, but they do not state the independence that we need for our reduction to ICA.
\begin{lemma} \label{lemma:independence} 
Let $p \in [1, \infty)$. Let $G=(G_1, \dotsc, G_n)$ be iid random variables with density proportional to $\exp(-\abs{t}^p)$. Let $W$ be a nonnegative random variable with distribution $\expdist{1}$ and independent of $G$. Then the random vector
\[
\frac{G}{(\norm{G}_p^p + W)^{1/p}}
\]
is independent of $(\norm{G}_p^p + W)^{1/p}$.
\end{lemma}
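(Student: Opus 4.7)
The plan is to prove independence via a direct change of variables on $(G, W)$. The natural map is $(G, W) \mapsto (U, S)$ where $U = G/S^{1/p}$ and $S = \norm{G}_p^p + W$; its inverse is $G = S^{1/p} U$ and $W = S(1 - \norm{U}_p^p)$, defined on the region where $\norm{U}_p^p < 1$ and $S > 0$. The key observation driving everything is that the joint density of $(G,W)$ is proportional to $\exp(-\norm{G}_p^p - W) = \exp(-S)$, so after the change of variables the exponential factor depends only on $S$.

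I would then compute the Jacobian of the inverse map. The derivative matrix has a natural block structure: an $n \times n$ block $\partial G/\partial U = S^{1/p} I$, a column $\partial G/\partial S$, a row $\partial W/\partial U$, and a scalar $\partial W/\partial S = 1 - \norm{U}_p^p$. A Schur complement calculation, using the identity $\sum_j U_j \cdot p \abs{U_j}^{p-1} \mathrm{sgn}(U_j) = p \norm{U}_p^p$, collapses the correction term and yields Jacobian determinant equal to $S^{n/p}$, independent of $U$. Consequently the pushforward density of $(U,S)$ is proportional to a function of $U$ alone (the indicator of $\norm{U}_p^p < 1$) times a function of $S$ alone (namely $S^{n/p} e^{-S}$). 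This factorization is the definition of independence of $U$ and $S$, hence of $U = G/S^{1/p}$ and $S^{1/p}$.

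An alternative route, which I would fall back on if the Jacobian bookkeeping turns out to be tedious, is to invoke classical Gamma/Dirichlet independence. By the change of variable $y = \abs{t}^p$ on the positive half-line, each $\abs{G_i}^p$ is distributed as $\mathrm{Gamma}(1/p, 1)$, and together with $W \sim \mathrm{Gamma}(1,1)$ these form an independent family of Gamma variables sharing rate parameter $1$. The standard fact that the normalized vector of such a family is Dirichlet-distributed and independent of its total then gives that $(\abs{G_1}^p/S, \ldots, \abs{G_n}^p/S)$ is independent of $S$; peeling off the signs $\mathrm{sgn}(G_i)$, which are iid uniform on $\{-1,+1\}$ and independent of $(\abs{G_1}, \ldots, \abs{G_n}, W)$ by symmetry of the density, reassembles $G/S^{1/p}$ from quantities each independent of $S$.

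The main technical obstacle I expect is not conceptual but calculational: verifying that the change of variables is a bijection onto the correct open set, so that $W > 0$ corresponds precisely to $\norm{U}_p^p < 1$, together with handling the mild non-differentiability of $\abs{U_j}^p$ at $U_j = 0$ when $p = 1$, which is harmless as it occurs on a set of Lebesgue measure zero.
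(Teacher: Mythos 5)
Your primary argument is correct and takes a genuinely different route from the paper. The paper deliberately avoids the chain-rule bookkeeping you undertake: it first conditions on the orthant, substitutes $H_i = G_i^p$ so that the $H_i$ become $\mathrm{Gamma}(1/p,1)$ variables, and only then changes variables via $(x,y)\mapsto (xy,\,y(1-\sum_i x_i))$, getting a Jacobian of $y^n$ by elementary row operations. You instead change variables directly on $(G,W)\mapsto (U,S)$ with $U = G/S^{1/p}$, $S=\norm{G}_p^p+W$, and your Schur-complement computation is right: the correction term is $1-\norm{U}_p^p - c^T A^{-1}b = 1-\norm{U}_p^p+\norm{U}_p^p = 1$, so the Jacobian is exactly $S^{n/p}$ and the density factors as $\mathbf{1}\{\norm{U}_p^p<1\}\cdot S^{n/p}e^{-S}$. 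What your route buys is directness --- no orthant conditioning, no injectivity argument for $t\mapsto t^p$, and as a bonus it re-derives that $U$ is uniform on $B_p^n$ (Theorem \ref{theorem:fullnaor}) rather than assuming it. What the paper's route buys is a nearly triangular Jacobian with no absolute values or sign functions anywhere. Your fallback route (Gamma/Dirichlet independence plus peeling off the signs) is essentially the paper's argument with the Dirichlet aggregation property cited rather than proved; if you use it, note that the paper effectively proves that standard fact from scratch, so citing it is a genuine shortcut. Your handling of the measure-zero non-differentiability at $U_j=0$ for $p=1$ is the right disclaimer and is harmless, as you say.
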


\begin{proofidea}
We aim to compute the join density, showing that it is a product of individual densities. To avoid complication, we raise everything to the $p$th power, which eliminates extensive use of the chain rule involved in the change of variables. 
\end{proofidea}

\begin{proof}
It is enough to show the claim conditioning on the orthant in which $G$ falls, and by symmetry it is enough to prove it for the positive orthant. Let random variable $H = (G_1^p, G_2^p, \dotsc, G_n^p)$.
Since raising (strictly) positive numbers to the $p$th power is injective, it suffices to show that the random vector 
\[
X = \frac{H}{\sum_{i=1}^n{H}_i + W}
\]
is independent of the random vector $Y = \sum_{i=1}^n{H}_i + W$. 
 
First, let $U$ be the interior of the support of $(X,Y)$, that is $U = \{ x \in \RR^{n} : x_i > 0, \sum_i x_i < 1 \}\times \{y \in \RR :y>0\}$  and consider $h: U \rightarrow \RR^{\dimension}$ and $w: U \rightarrow \RR$ where
\[
h(x,y) = x y
\] and 
\[
w(x,y) = y - \sum_{i=1}^n h(x,y)_i = y - \sum_{i=1}^n x_i \cdot y = y\left(1 - \sum_{i=1}^n x_i\right).
\]
The random vector $(H,W)$ has a density $f_{H,W}$ supported on $V  = \operatorname{int} \RR^{n+1}_+$ and 
$$(x,y) \mapsto (h(x,y), w(x,y))$$
is one-to-one from $U$ onto $V$. 
Let $J(x,y)$ be the determinant of its Jacobian. This Jacobian is
\[
\begin{pmatrix}
y & 0 & \cdots & 0 & x_1 \\
0 & y & \cdots & 0 & x_2 \\
\vdots & & & & \vdots \\
0 & 0 & \cdots & y & x_n \\
-y & -y & \cdots & -y & 1 - \sum_{i=1}^n x_i \\
\end{pmatrix}
\]
which, by adding each of the first $n$ rows to the last row, reduces to
\[
\begin{pmatrix}
y & 0 & \cdots & 0 & x_1 \\
0 & y & \cdots & 0 & x_2 \\
\vdots & & & & \vdots \\
0 & 0 & \cdots & y & x_n \\
0 & 0 & \cdots & 0 & 1 \\
\end{pmatrix},
\]
the determinant of which is trivially $J(x,y) = y^n$.

We have that $J(x,y)$ is nonzero in $U$. Thus, $(X,Y)$ has density $f_{X,Y}$ supported on $U$ given by
\begin{align*}
f_{X,Y}(x,y) &= f_{H,W}\bigl(h(x,y),w(x,y)\bigr) \cdot \abs{J(x)}.
\end{align*}

It is easy to see\footnote{See for example \cite[proof of Theorem 3]{barthe2005probabilistic}.} that each $H_i = G_i^p$ has density $\mathrm{Gamma}(1/p, 1)$ and thus $\sum_{i=1}^n H_i$ has density $\mathrm{Gamma}(n/p,1)$ by the additivity of the Gamma distribution.  We then compute the joint density
\begin{align*}
f_{X,Y}(x,y) &= f_{H,W}\bigl(h(x,y),w(x,y)\bigr) \cdot y^n \\
&= f_{H,W}\Bigl(xy, y(1 - \sum\limits_{i=1}^n x_i)\Bigr) \cdot y^n.
\end{align*}
Since $W$ is independent of $H$,
\begin{align*}
f_{X,Y}(x,y) &=  f_{W}\bigg(y\Big(1 - \sum\limits_{i=1}^n x_i\Big)\bigg) \cdot y^n \prod_{i=1}^n f_{H_i}(x_iy) 
\end{align*}
where
\begin{align*}
\prod_{i=1}^{\dimension}\Big(f_{H_i}(x_iy)\Big) \cdot f_{W}\bigg(y\Big(1 - \sum\limits_{i=1}^n x_i\Big)\bigg) \cdot y^n 
&\propto \prod_{i=1}^n \left[e^{-x_iy}(x_iy)^{\frac{1}{p} - 1}\right] \exp\left(-y(1-\sum\limits_{i=1}^n x_i)\right) y^n \\
&\propto \biggl(\prod_{i=1}^n x_i^{\frac{1}{p} - 1}\biggr) y^{n/p}.
\end{align*}
The result follows.
\end{proof}

With this in mind, we show now our analog of Theorem \ref{thm:simplexscaling} for $B_p^{\dimension}$.

\begin{theorem}\label{thm:lpscaling}
Let $X$ be a uniformly random vector in $B_p^{\dimension}$. Let $T$ be a random scalar distributed as $\mathrm{Gamma}((n/p)+1,1)$. Then the coordinates of $T^{1/p} X$ are iid, each with density proportional to $\exp(-\abs{t}^p)$.
Moreover, if $A: \RR^{\dimension} \rightarrow \RR^{\dimension}$ is an invertible linear transformation, then the random vector given by $T^{1/p}A(X)$ has independent components.
\end{theorem}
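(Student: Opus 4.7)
The plan is to combine Theorem \ref{theorem:fullnaor} with Lemma \ref{lemma:independence} to identify the joint distribution of $(X,T)$ with that of a pair derived from iid $\exp(-|t|^p)$ variables, and then read off the distribution of $T^{1/p} X$ directly.

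First I would set up the representation: let $G = (G_1, \dotsc, G_n)$ be iid with density proportional to $\exp(-|t|^p)$ and let $W \sim \expdist{1}$ be independent of $G$. By Theorem \ref{theorem:fullnaor}, the random vector $G/(\norm{G}_p^p + W)^{1/p}$ is uniformly distributed in $B_p^n$, so it has the same distribution as $X$. By Lemma \ref{lemma:independence}, this vector is independent of $(\norm{G}_p^p + W)^{1/p}$.

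Next I would identify the distribution of $\norm{G}_p^p + W$. As observed inside the proof of Lemma \ref{lemma:independence}, each $|G_i|^p$ has distribution $\mathrm{Gamma}(1/p,1)$; using the additivity of the Gamma distribution recalled in Section \ref{sec:preliminaries}, $\norm{G}_p^p = \sum_i |G_i|^p$ has distribution $\mathrm{Gamma}(n/p,1)$, and adding the independent $W \sim \mathrm{Gamma}(1,1)$ yields $\norm{G}_p^p + W \sim \mathrm{Gamma}(n/p + 1, 1)$. So this quantity has the same distribution as $T$ in the statement. Combined with the independence from Lemma \ref{lemma:independence}, the joint distribution of $(X,T)$ matches that of $\bigl( G/(\norm{G}_p^p + W)^{1/p}, \; \norm{G}_p^p + W \bigr)$.

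Applying the (measurable) map $(x,t) \mapsto t^{1/p} x$ to both sides of this distributional equality, I get
\[
T^{1/p} X \stackrel{d}{=} (\norm{G}_p^p + W)^{1/p} \cdot \frac{G}{(\norm{G}_p^p + W)^{1/p}} = G,
\]
whose coordinates are iid with the required density, proving the first assertion. For the ``moreover'' part, since $A$ is linear, $T^{1/p} A(X) = A(T^{1/p} X)$, and by what was just shown $T^{1/p} X$ has independent coordinates; hence $T^{1/p} A(X)$ is an affine image of a vector with independent coordinates, i.e.\ has independent components in the sense of Definition \ref{def:ic}. The only nontrivial ingredient is the independence supplied by Lemma \ref{lemma:independence}; everything else is bookkeeping with the Gamma additivity and the representation from Theorem \ref{theorem:fullnaor}.
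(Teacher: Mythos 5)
Your proof is correct and takes essentially the same route as the paper's: both identify the joint law of $(X,T^{1/p})$ with that of $\bigl(G/(\norm{G}_p^p+W)^{1/p},\,(\norm{G}_p^p+W)^{1/p}\bigr)$ via Theorem \ref{theorem:fullnaor} and Lemma \ref{lemma:independence}, then push forward through $(x,t)\mapsto tx$. You merely spell out one step the paper leaves implicit (that $\norm{G}_p^p+W\sim\mathrm{Gamma}(n/p+1,1)$ by Gamma additivity, matching the law of $T$), which is a welcome clarification rather than a different argument.
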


\begin{proof}
Let $G = (G_1, \dotsc, G_{\dimension})$ where each $G_i$ is  iid as $\mathrm{Gamma}(1/p, 1)$. Also, let $W$ be an independent random variable distributed as $\expdist{1}$. Let $S = \bigl(\sum_{i=1}^n |G_i|^p + W \bigr)^{1/p}$. 

By Lemma \ref{lemma:independence} and Theorem \ref{theorem:fullnaor} we know $(G/S,S)$ has the same joint distribution as $(X,T^{1/p})$. Then for the measurable function $f(x,t) = xt$, we immediately have $f(X,T^{1/p})$ has the same distribution as $f(G/S, S)$ and thus $XT^{1/p}$ has the same distribution as $G$. 

For the second part, since $T$ is a scalar, we have $T^{1/p}A(X) = A(T^{1/p}X)$. By the previous argument we have that the coordinates of $T^{1/p}X$ are independent. Thus, $A(T^{1/p}X)$ has independent components. 
\end{proof}

%

This result shows that one can obtain a vector with independent components from a sample in a linearly transformed $\ell_p$ ball.
In Section \ref{sec:reduction} we show that they are related in such as way that one can recover the linear transformation from the independent components via ICA.


\section{Learning problems that reduce to ICA}\label{sec:reduction}
Independent component analysis is a certain computational problem and an associated family of algorithms. Suppose that $X$ is a random $\dimension$-dimensional vector whose coordinates are independently distributed. The coordinates' distributions are unknown and not necessarily identical. The ICA problem can be stated as follows: given samples from an affine transformation $Y=AX+b$ of $X$, estimate $A$ and $b$ (up to a certain intrinsic indeterminacy: permutation and scaling of the columns of $A$). We will state more precisely below what is expected of a an ICA algorithm.


We show randomized reductions from the following two natural statistical estimation problems to ICA:

\begin{problem}[simplex] 
Given uniformly random points from an $n$-dimensional simplex, estimate the simplex.
\end{problem} 

This is the same problem of learning a simplex as in the rest of the paper, we just restate it here for clarity.

To simplify the presentation for the second problem, we ignore the estimation of the mean of an affinely transformed distribution. That is, we assume that the $\ell_p^n$ ball to be learned has only been \emph{linearly} transformed.

\begin{problem}[linearly transformed $\ell_p^n$ balls]
Given uniformly random points from a linear transformation of the  $\ell_p^n$-ball, estimate the linear transformation.
\end{problem}

These problems do not have an obvious independence structure. Nevertheless, known representations of the uniform measure in an $\ell_p^n$ ball and the \emph{cone measure} (defined in Section \ref{sec:preliminaries}) on the surface of an $\ell_p^n$ ball can be slightly extended to map a sample from those distributions into a sample with independent components by a non-linear scaling step.
The use of a non-linear scaling step to turn a distribution into one having independent components has been done before \cite{MR2756189, DBLP:conf/nips/SinzB08}, but there it is applied \emph{after} finding a transformation that makes the distribution axis-aligned. This alignment is attempted using ICA (or variations of PCA) on the original distribution \cite{MR2756189, DBLP:conf/nips/SinzB08}, without independent components, and therefore the use of ICA is somewhat heuristic. One of the contributions of our reduction is that the rescaling we apply is ``blind'', namely, it can be applied to the original distribution. In fact, the distribution does not even need to be isotropic (``whitened''). The distribution resulting from the reduction has independent components and therefore the use of ICA on it is well justified.

The reductions are given in Algorithms \ref{alg:simplexreduction} and \ref{alg:lpreduction}. To state the reductions, we denote by $ICA(s(1), s(2), \ldots)$ the invocation of an ICA routine. 
It takes samples $s(1), s(2), \ldots$ of a random vector $Y=AX + \mu$, where the coordinates of $X$ are independent, and returns an approximation to a square matrix $M$ such that $M(Y-\e(Y))$ is isotropic and has independent coordinates. 
The theory of ICA \cite[Theorem 11]{comon1994independent} implies that if $X$ is isotropic and at most one coordinate is distributed as a Gaussian, then such an $M$ exists and it satisfies $M A = DP$, where $P$ is a permutation matrix and $D$ is a diagonal matrix with diagonal entries in $\{-1, 1\}$. We thus need the following definition to state our reduction: Let $c_{p,n} = (\e_{X \in B_p^n}(X_1^2))^{1/2}$. That is, the uniform distribution in $B_p^n/c_{p,n}$ is isotropic.

As we do not state a full analysis of any particular ICA routine, we do not state explicit approximation guarantees.

\begin{algorithm}
\caption{Reduction from Problem 1 to ICA}\label{alg:simplexreduction}

\begin{algorithmic}
\State Input: A uniformly random sample $p(1), \dotsc, p(t)$ from an $n$-dimensional simplex $S$.
\State Output: Vectors $\tilde v(1), \dotsc, \tilde v(n+1)$ such that their convex hull is close to $S$.
\vspace{.2em}
\hrule
\vspace{.2em}
\State Embed the sample in $\RR^{n+1}$: Let $p'(i) = (p(i), 1)$.

\State For every $i = 1, \dotsc, t$, generate a random scalar $T(i)$ distributed as $\mathrm{Gamma}(n+1, 1)$. Let $q(i) = p'(i) T(i)$.
\State Invoke $ICA(q(1), \dotsc, q(t))$ to obtain a approximately separating matrix $\tilde M$.
\State Compute the inverse of $\tilde M$ and multiply every column by the sign of its last entry to get a matrix $\tilde A$.
\State Remove the last row of $\tilde A$ and return the columns of the resulting matrix as $\tilde v(1), \dotsc, \tilde v(n+1)$.
\end{algorithmic}
\end{algorithm}
Algorithm \ref{alg:simplexreduction} works as follows: Let $X$ be an $(n+1)$-dimensional random vector with iid coordinates distributed as $\expdist{1}$. Let $V$ be the matrix having columns $(v(i),1)$ for $i=1, \dotsc, n+1$. Let $Y$ be random according to the distribution that results from scaling in the algorithm. Theorem \ref{thm:simplexscaling} implies that $Y$ and $VX$ have the same distribution. Also, $X-\ones$ is isotropic and $Y$ and $V(X-\ones) + V \ones$ have the same distribution. Thus, the discussion about ICA earlier in this section gives that the only separating matrices $M$ are such that $M V = D P$ where $P$ is a permutation matrix and $D$ is a diagonal matrix with diagonal entries in $\{-1, 1\}$. That is, $V P^T = M^{-1} D$. As the last row of $V$ is all ones, the sign change step in Algorithm \ref{alg:simplexreduction} undoes the effect of $D$ and recovers the correct orientation.
 
\begin{algorithm}
\caption{Reduction from Problem 2 to ICA}\label{alg:lpreduction}
\begin{algorithmic}
\State Input: A uniformly random sample $p(1), \dotsc, p(t)$ from $A(B_p^n)$ for a known parameter $p \in [1, \infty)$, where $A:\RR^n \to \RR^n$ is an unknown invertible linear transformation.
\State Output: A matrix $\tilde A$ such that $\tilde A B_p^n$ is close to $A(B_p^n)$.
\vspace{.2em}
\hrule
\vspace{.2em}

\State For every $i = 1, \dotsc, t$, generate a random scalar $T(i)$ distributed as $\mathrm{Gamma}((n/p)+1,1)$. Let $q(i) = p(i)  T(i)^{1/p}$.
\State Invoke $ICA(q(1), \dotsc, q(t))$ to obtain an approximately separating matrix $\tilde M$. 
\State Output $\tilde A = c_{p,n}^{-1} \tilde M^{-1}$.
\end{algorithmic}
\end{algorithm}
%
%

Similarly, Algorithm \ref{alg:lpreduction} works as follows: Let $X$ be a random vector with iid coordinates, each with density proportional to $\exp(-\abs{t}^p)$. Let $Y$ be random according to the distribution that results from scaling in the algorithm. Theorem \ref{thm:lpscaling} implies that $Y$ and $AX$ have the same distribution. Also, $X/c_{p,n}$ is isotropic and we have $Y$ and $Ac_{p,n}(X/c_{p,n})$ have the same distribution. Thus, the discussion about ICA earlier in this section gives that the only separating matrices $M$ are such that $M A c_{p,n} = D P$ where $P$ is a permutation matrix and $D$ is a diagonal matrix with diagonal entries in $\{-1, 1\}$. That is, $A P^{T} D^{-1} = c_{p,n}^{-1} M^{-1}$. The fact that $B_p^n$ is symmetric with respect to coordinate permutations and sign changes implies that $A P^{T} D^{-1} B_p^n = A B_p^n$ and is the same as $c_{p,n}^{-1} M^{-1}$. When $p \neq 2$, the assumptions in the discussion above about ICA are satisfied and Algorithm \ref{alg:lpreduction} is correct. When $p = 2$, the distribution of the scaled sample is Gaussian and this introduces ambiguity with respect to rotations in the definition of $M$, but this ambiguity is no problem as it is counteracted by the fact that the $l_2$ ball is symmetric with respect to rotations.




\section{Conclusion}
We showed, in two different ways, that the problem of learning simplices can be solved efficiently using techniques for ICA. We also showed that when the sample is one that may not satisfy the requirement of independent components, we can efficiently obtain from it a sample that guarantees this property and from which the original distribution can be estimated. 
Many questions remain: 
Can we do this for other polytopes? Can we do this when the points come from the Gaussian distribution with labels instead
of the uniform distribution in the polytope? In particular, does any one of the two techniques that we used 
in this paper for learning simplices extend to learning polytopes or to 
latent variable models?

\acks{We thank Santosh Vempala for telling us about the polytope learning problem, the approach of using higher-order moments and for helpful discussions.
We also thank Keith Ball, Alexander Barvinok, Franck Barthe, Mikhail Belkin, Adam Kalai, Assaf Naor, Aaditya Ramdas, Roman Vershynin and James Voss for helpful discussions.}

\bibliography{simplex_bibliography,refs}

\newcommand{\etalchar}[1]{$^{#1}$}
\begin{thebibliography}{SPST{\etalchar{+}}01}

\bibitem[AGH{\etalchar{+}}12]{AnandTensor}
Anima Anandkumar, Rong Ge, Daniel Hsu, Sham~M. Kakade, and Matus Telgarsky.
\newblock Tensor decompositions for learning latent variable models.
\newblock {\em CoRR}, abs/1210.7559, 2012.

\bibitem[AGMS12]{Arora2012}
Sanjeev Arora, Rong Ge, Ankur Moitra, and Sushant Sachdeva.
\newblock Provable {ICA} with unknown {G}aussian noise, and implications for
  {G}aussian mixtures and autoencoders.
\newblock In {\em NIPS}, 2012.
\newblock arXiv:1206.5349.

\bibitem[ALPTJ10]{MR2601042}
Rados{\l}aw Adamczak, Alexander~E. Litvak, Alain Pajor, and Nicole
  Tomczak-Jaegermann.
\newblock Quantitative estimates of the convergence of the empirical covariance
  matrix in log-concave ensembles.
\newblock {\em J. Amer. Math. Soc.}, 23(2):535--561, 2010.

\bibitem[BGMN05]{barthe2005probabilistic}
F.~Barthe, O.~Gu{\'e}don, S.~Mendelson, and A.~Naor.
\newblock A probabilistic approach to the geometry of the $\ell_p^n$-ball.
\newblock {\em The Annals of Probability}, 33(2):480--513, 2005.

\bibitem[Bil95]{MR1324786}
Patrick Billingsley.
\newblock {\em Probability and measure}.
\newblock Wiley Series in Probability and Mathematical Statistics. John Wiley
  \& Sons Inc., New York, third edition, 1995.
\newblock A Wiley-Interscience Publication.

\bibitem[CJ10]{BlindSS}
Pierre Comon and Christian Jutten, editors.
\newblock {\em Handbook of Blind Source Separation}.
\newblock Academic Press, 2010.

\bibitem[Com94]{comon1994independent}
P.~Comon.
\newblock Independent component analysis, a new concept?
\newblock {\em Signal processing}, 36(3):287--314, 1994.

\bibitem[FJK96]{FJK}
Alan~M. Frieze, Mark Jerrum, and Ravi Kannan.
\newblock Learning linear transformations.
\newblock In {\em FOCS}, pages 359--368, 1996.

\bibitem[GLPR12]{Gravin12}
Nick Gravin, Jean Lasserre, Dmitrii~V. Pasechnik, and Sinai Robins.
\newblock The inverse moment problem for convex polytopes.
\newblock {\em Discrete {\&} Computational Geometry}, 48(3):596--621, 2012.

\bibitem[GM78]{GrundmannM78}
A.~Grundmann and H.~M. Moeller.
\newblock Invariant integration formulas for the $n$-simplex by combinatorial
  methods.
\newblock {\em SIAM J. Numer. Anal.}, 15:282--290, 1978.

\bibitem[GR09]{GR09}
Navin Goyal and Luis Rademacher.
\newblock Learning convex bodies is hard.
\newblock In {\em COLT}, 2009.

\bibitem[GS97]{MR1456629}
A.~K. Gupta and D.~Song.
\newblock {$L_p$}-norm spherical distribution.
\newblock {\em J. Statist. Plann. Inference}, 60(2):241--260, 1997.

\bibitem[HKO01]{ICAbook}
Aapo Hyv\"{a}rinen, Juha Karhunen, and Erkki Oja.
\newblock {\em Independent Component Analysis}.
\newblock Wiley, 2001.

\bibitem[Hyv99]{Hyvarinen99}
Aapo Hyv{\"a}rinen.
\newblock Fast and robust fixed-point algorithms for independent component
  analysis.
\newblock {\em IEEE Transactions on Neural Networks}, 10(3):626--634, 1999.

\bibitem[KOS08]{KOR}
Adam~R. Klivans, Ryan O'Donnell, and Rocco~A. Servedio.
\newblock Learning geometric concepts via {G}aussian surface area.
\newblock In {\em FOCS}, pages 541--550, 2008.

\bibitem[KP98]{KwekP98}
Stephen Kwek and Leonard Pitt.
\newblock {PAC} learning intersections of halfspaces with membership queries.
\newblock {\em Algorithmica}, 22(1/2):53--75, 1998.

\bibitem[KS07]{KlivansS07}
Adam~R. Klivans and Alexander~A. Sherstov.
\newblock A lower bound for agnostically learning disjunctions.
\newblock In {\em COLT}, pages 409--423, 2007.

\bibitem[KS09]{KlivansS09}
Adam~R. Klivans and Alexander~A. Sherstov.
\newblock Cryptographic hardness for learning intersections of halfspaces.
\newblock {\em J. Comput. Syst. Sci.}, 75(1):2--12, 2009.

\bibitem[LA01]{LasserreA}
Jean~B. Lasserre and Konstantin~E. Avrachenkov.
\newblock The multi-dimensional version of $\int a^b x^p dx$.
\newblock {\em American Math. Month.}, 108(2):151--154, 2001.

\bibitem[LY08]{MR2423726}
David~G. Luenberger and Yinyu Ye.
\newblock {\em Linear and nonlinear programming}.
\newblock International Series in Operations Research \& Management Science,
  116. Springer, New York, third edition, 2008.

\bibitem[NR09]{NguyenR09}
Phong~Q. Nguyen and Oded Regev.
\newblock Learning a parallelepiped: Cryptanalysis of {GGH} and {NTRU}
  signatures.
\newblock {\em J. Cryptology}, 22(2):139--160, 2009.

\bibitem[Pac02]{Packer}
Asa Packer.
\newblock {NP}-hardness of largest contained and smallest containing simplices
  for {V}- and {H}-polytopes.
\newblock {\em Discrete {\&} Computational Geometry}, 28(3):349--377, 2002.

\bibitem[RR91]{rachev1991approximate}
S.T. Rachev and L.~Ruschendorf.
\newblock Approximate independence of distributions on spheres and their
  stability properties.
\newblock {\em The Annals of Probability}, 19(3):1311--1337, 1991.

\bibitem[SB08]{DBLP:conf/nips/SinzB08}
Fabian~H. Sinz and Matthias Bethge.
\newblock The conjoint effect of divisive normalization and orientation
  selectivity on redundancy reduction.
\newblock In {\em NIPS}, pages 1521--1528, 2008.

\bibitem[SB10]{MR2756189}
Fabian Sinz and Matthias Bethge.
\newblock {$L_p$}-nested symmetric distributions.
\newblock {\em J. Mach. Learn. Res.}, 11:3409--3451, 2010.

\bibitem[SG97]{MR1396997}
D.~Song and A.~K. Gupta.
\newblock {$L_p$}-norm uniform distribution.
\newblock {\em Proc. Amer. Math. Soc.}, 125(2):595--601, 1997.

\bibitem[SPST{\etalchar{+}}01]{ScholkopfPSSW01}
Bernhard Sch{\"o}lkopf, John~C. Platt, John Shawe-Taylor, Alex~J. Smola, and
  Robert~C. Williamson.
\newblock Estimating the support of a high-dimensional distribution.
\newblock {\em Neural Computation}, 13(7):1443--1471, 2001.

\bibitem[SV11]{1106.2775}
Nikhil Srivastava and Roman Vershynin.
\newblock Covariance estimation for distributions with $2+\epsilon$ moments,
  2011.

\bibitem[SZ90]{schechtman1990volume}
G.~Schechtman and J.~Zinn.
\newblock On the volume of the intersection of two $l_p^n$ balls.
\newblock In {\em Proc. Amer. Math. Soc}, volume 110, pages 217--224, 1990.

\bibitem[Vem10a]{VempalaFocs10}
Santosh Vempala.
\newblock Learning convex concepts from {G}aussian distributions with {PCA}.
\newblock In {\em FOCS}, pages 124--130, 2010.

\bibitem[Vem10b]{Vempala10}
Santosh Vempala.
\newblock A random-sampling-based algorithm for learning intersections of
  halfspaces.
\newblock {\em J. ACM}, 57(6):32, 2010.

\bibitem[VX11]{VempalaX11}
Santosh~S. Vempala and Ying Xiao.
\newblock Structure from local optima: Learning subspace juntas via higher
  order {PCA}.
\newblock {\em CoRR}, abs/1108.3329, 2011.

\end{thebibliography}

\appendix

\end{document}

